\documentclass{article} 
\usepackage{iclr2026_conference,times}

\usepackage{amsmath,amsfonts,bm}

\def\eqref#1{equation~\ref{#1}}

\def\1{\bm{1}}

\DeclareMathAlphabet{\mathsfit}{\encodingdefault}{\sfdefault}{m}{sl}
\SetMathAlphabet{\mathsfit}{bold}{\encodingdefault}{\sfdefault}{bx}{n}

\usepackage{MnSymbol}
\usepackage{amsthm}
\usepackage{amsmath}
\usepackage{hyperref}
\usepackage{url}
\usepackage{graphicx}
\usepackage{subcaption}
\usepackage{multirow}
\usepackage{booktabs}
\usepackage{xspace}
\usepackage{algorithm}
\usepackage{algpseudocode}
\usepackage[table]{xcolor} 
\usepackage[most]{tcolorbox}
\usepackage{soul}
\usepackage{threeparttable}

\newtheorem{proposition}{Proposition}
\newtheorem{theorem}{Theorem}
\newtheorem{lemma}{Lemma}

\renewcommand{\d}{\mathop{}\!\mathrm{d}}
\newcommand{\given}{\,|\,}

\newcommand{\hide}[1]{}

\newcommand{\name}{\textsc{FaReG}\xspace}

\usepackage{colortbl}
\usepackage[most]{tcolorbox}
\newtcolorbox[auto counter, number freestyle={\noexpand\arabic{\tcbcounter}}]{promptbox}[2][]{%
    enhanced,
    colback=blue!5!white,
    colframe=black!75!white,
    title=Prompt~\thetcbcounter: #2,
    #1
}

\title{Fair Conformal Classification via Learning Representation-Based Groups}

\author{Senrong Xu$^1$,~
Yanke Zhou$^1$,~
Yuhao Tan$^1$,~
Zenan Li$^2$,~
Yuan Yao$^1$,~
Taolue Chen$^3$,\\
\textbf{Feng Xu}$^1$,~ 
\textbf{Xiaoxing Ma}$^1$\\
\\
$^1$ State Key Lab of Novel Software Technology, Nanjing University, China, \\
$^2$ ETH Zürich, $^3$ Birkbeck, University of London \\
\texttt{\{srxu,yankezhou,yhtan\}@smail.nju.edu.cn}, \texttt{zenan.li@inf.ethz.ch} \\
\texttt{\{y.yao,xf,xxm\}@nju.edu.cn}, \texttt{t.chen@bbk.ac.uk}
}

\iclrfinalcopy 
\begin{document}

\maketitle

\begin{abstract}
Conformal prediction methods provide statistically rigorous marginal coverage guarantees for machine learning models, but such guarantees fail to account for algorithmic biases, thereby undermining fairness and trust. 
This paper introduces a fair conformal inference framework for classification tasks. 
The proposed method constructs prediction sets that guarantee conditional coverage on adaptively identified subgroups, which can be implicitly defined through nonlinear feature combinations.
By balancing effectiveness and efficiency in producing compact, informative prediction sets and ensuring adaptive equalized coverage across unfairly treated subgroups, our approach paves a practical pathway toward trustworthy machine learning. Extensive experiments on both synthetic and real-world datasets demonstrate the effectiveness of the framework.
\end{abstract}


\section{Introduction}\label{sec:intro}
The rapid advancement of modern machine learning models, especially deep neural networks, has enabled their deployment in high-stake decision-making situations such as medical diagnoses~\citep{kaur2020medical}, resume filtering~\citep{deshpande2020mitigating}, and financial fraud detection~\citep{kamuangu2024review}. Despite their strong average performance, real-world deployment raises critical challenges, notably in uncertainty quantification~\citep{guo2017calibration,ahmed2023deep} and algorithmic fairness~\citep{berk2024improving,almasoud2025algorithmic}.

Ensuring reliable decision-making necessitates the development of unbiased uncertainty measures, as even highly accurate models are prone to producing over-confident and erroneous predictions~\citep{ovadia2019can}.
Conformal Prediction (CP,~\citep{vovk2005algorithmic,smith2024uncertainty}) has emerged as a key framework for providing distribution-free, model-agnostic prediction sets with user-specified (marginal) coverage guarantees. These sets provide reliable uncertainty information for decision-makers especially when the set size is small (i.e., with high efficiency).

On the other hand, algorithmic biases often manifest as disproportionately poor performance on the subgroup defined by specific feature conditions (e.g., \textit{Race=Black \& Gender=Female}), which may arise from imbalanced data distribution or model inherent limitations~\citep{hellman2020measuring}. These biases underscore the need for algorithmic fairness mechanisms that extend beyond average performance to ensure equitable treatment across all groups~\citep{fabris2022algorithmic,das2023algorithmic}. 
However, there may exist tensions between the efficiency of CP and algorithmic fairness, because the former desires a small prediction set, while the latter 
may necessitate 
larger sets for equal conditional coverage across all subgroups~\citep{gibbs2025conformal}.

Conformal prediction with \emph{equalized coverage}~\citep{romano2020malice} provides a pragmatic approach to the efficiency–fairness trade-off. This approach ensures that the target coverage level (e.g., 90\%) is satisfied not only marginally over the entire population, but also conditionally on each protected group of interest. 
However, acquiring prediction sets with equalized coverage is challenging,
as the number of all plausible groups of interest is exponential in the number of features.
A straightforward enumeration is practically infeasible both statistically and computationally, 
especially on multi-dimensional (continuous) features.
Indeed, \citet{romano2020malice} only takes each single feature as the condition of groups (e.g., a group defined by $\textit{Gender=Female}$), which is an arguably insufficient 
representation of the entire space of groups.




\begin{figure}[t]
    \centering
    \includegraphics[width=0.6\linewidth]{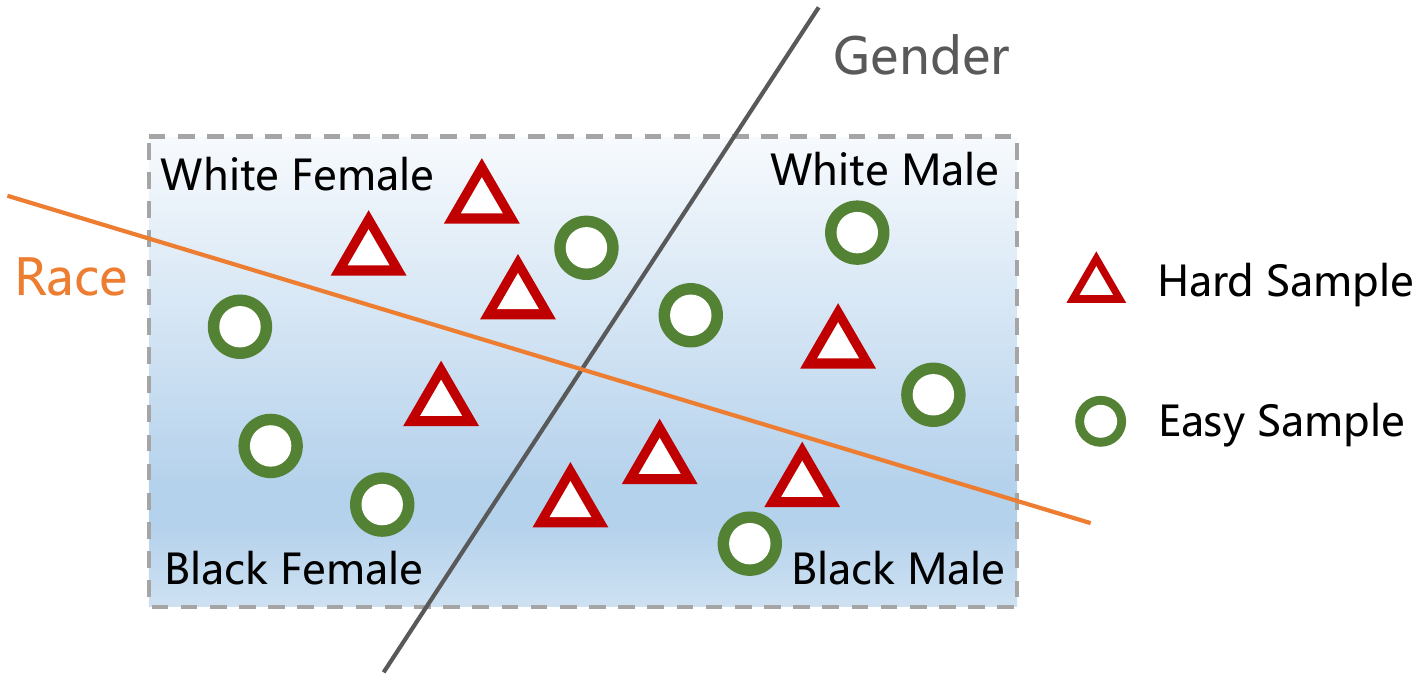}
    \vskip -0.5em
    \caption{An illustrative example. The group space is divided into four parts by the feature {\em Race} and {\em Gender}. Hard samples (red triangles) are unfairly treated by the classifier, and easy samples (green circles) are normally treated. Note that a single feature (either Race or Gender) cannot discover the unfair subgroups (both have four triangles and four circles). Stronger expressiveness is desirable to capture the unfair subgroup ``\textit{White Female} or \textit{Black Male}''.}
    \label{F:intro}
\end{figure}


Later, \citet{zhou2024conformal} observe that 
algorithmic biases often concentrate on a minority of subgroups, 
and propose adaptively fair conformal prediction (AFCP) 
to identify these potentially disadvantaged subgroups. 
In a nutshell, AFCP computes the conditional coverage score for each discrete feature and selects the top-$k$ sensitive features with a greedy strategy (where $k$ is a hyperparameter). 
However, this group identification method still has limited expressiveness. For example, it cannot capture 
groups defined by a nonlinear combination of features, such as Exclusive OR (see the subgroup ``\textit{White Female} or \textit{Black Male}'' 
in Fig.~\ref{F:intro}).
Additionally, AFCP is based on Na\"{i}ve Bayes, which incurs a high computational cost and restricts its applicability to continuous features.
 

\subsection{Our Contributions}
In this paper, we propose a new group-fair conformal prediction method, \textbf{fair conformal prediction for representation-based groups} (\name), which accommodates both group expressiveness and time efficiency.
Different from existing work~\citep{romano2020malice,zhou2024conformal} which directly extracts groups from the raw input feature $X$, our approach encodes $X$ into a 
latent representation $Z$ via a mapping $Z=f(X)$, and learns unfair groups characterized by the low group coverage 
based on $Z$.
The introduction of 
$Z$ as a high-level representation of features strengthens the expressiveness of models,  
allowing a thorough exploration of groups.  
Meanwhile, we can enhance the interpretability by reconstructing input $X$ from the encoding $Z$. 
To this end, we carefully design an encoder-decoder architecture and the optimization objective, based on the principle of variational inference.

In addition, we propose a \emph{nonlinear} version of the conditional coverage metric WSC~\citep{cauchois2021knowing}, namely $\text{WSC}^+$, aiming to evaluate the conditional coverage of unfairly treated groups more precisely. This allows users to check a conformal procedure and to compare multiple alternative conformal procedures. 


The main contributions of this paper are summarized as follows. 
First, we propose a new conformal prediction method 
to enhance the expressiveness of unfair group identification.
Second, we extend the traditional conditional coverage metric WSC to a nonlinear version $\text{WSC}^+$ for more accurate evaluation.
Comprehensive experiments on both synthetic and real-world datasets confirm the effectiveness and efficiency of our proposed method.



\section{Preliminary} \label{sec:prel}

For any natural number $n$, we write $[n]:=\{1, \ldots, n\}$.
We work with the most widely-used version of conformal prediction, i.e., \emph{split} conformal prediction, where we assume a calibration set $\mathcal{D}=\{(X_i,Y_i)\}_{i=1}^N$ of i.i.d. (or simply exchangeable) observations sampled from an (unknown) distribution $P_{XY}$. In standard classification, $X_i\in \mathcal{X}$ represents the input feature from a feature space $\mathcal{X}\subseteq\mathbb{R}^d$ and $Y_i \in [L]$ is a categorical label. 
A given classifier $\hat{f}$ is trained (on a training set) to predict the conditional distribution $P(Y \given X)$. Furthermore, $X_{N+1}$ is a test instance with an unknown label $Y_{N+1}$ sampled by $P_{XY}$.
%
CP constructs a prediction set $C(X_{N+1})$ for  $Y_{N+1}$ based on $\mathcal{D}$. 
The output $C(X_{N+1})$ guarantees marginal coverage at a user-specified level, i.e.,
\[
\mathbb{P}[Y_{N+1} \in C(X_{N+1})] \geq 1 - \alpha,
\]
where $\alpha\in (0,1)$ is a predefined miscoverage rate. 

Typically, CP proceeds in three steps: 
(1) computing the predefined conformity score $V(x_i,y_i)$ for each sample $(x_i,y_i)\in\mathcal{D}$ using the predictive results of the classifier $\hat{f}$; 
(2) setting $(1-\alpha)(1+1/N)$-quantile score of $\mathcal{D}$ as a threshold $\hat{\eta}$; 
(3) constructing the prediction set $C_\frak{m}(X_{N+1},\mathcal{D}):=\{y\in [K] \given V(X_i,y)\geq \hat{\eta}\}$, which is used as $C(X_{N+1})$ for $X_{N+1}$. 

It can be shown that $C_\frak{m}(X_{N+1},\mathcal{D})$ meets the desirable \emph{marginal coverage}. 
Intuitively, marginal coverage implies that the prediction set is guaranteed to contain the true label with the \emph{average} $1-\alpha$ probability over the population. 
However, this guarantee is deemed to be insufficient, 
especially when miscoverage exhibits systematic bias, disproportionately affecting individuals belonging to groups characterized by certain features.  

By contrast, \emph{conditional coverage} requires $\mathbb{P}[Y_{N+1} \in C(X_{N+1}) \given X_{N+1}=x] \geq 1 - \alpha$ for each $x\in \mathcal{X}$.  
This is much stronger 
as it demands correct coverage across all regions of the feature space, not just on average. However, achieving conditional coverage is impossible 
without imposing extra assumptions on the underlying distribution $P_{XY}$ (such as 
the smoothness of $P_{XY}$~\citep{cai2014adaptive,lei2014distribution} and strictly limiting the size of feature space $\mathcal{X}$~\citep{lee2021distribution}). As these strong assumptions are often violated, conditional coverage is less meaningful in practice.
%
 
Equalized coverage~\citep{romano2020malice} represents a pragmatic compromise   
to ensure validity across \emph{predefined} sample groups that need to be protected. 
Given a group $\mathcal{G}\subseteq \mathcal{X}$, 
it is required that 
\begin{equation*}
\mathbb{P}[Y_{N+1} \in C(X_{N+1}) \given X_{N+1}\in \mathcal{G}] \geq 1 - \alpha
\end{equation*}
for all $\mathcal{G}$ of interest. 
In particular, these groups are typically related to some specific features called sensitive features.

However, the requirement for rigorous equalized coverage is localized, as algorithmic biases disproportionately affect only a minority of subgroups~\citep{zhou2024conformal}, as mentioned in Seciton~\ref{sec:intro}.
Therefore, AFCP further proposes adaptive equalized coverage based on equalized coverage, formalized by
\begin{equation}\label{Eq:Adaptive equalized coverage}
\mathbb{P}[Y_{N+1} \in C(X_{N+1}) \given X_{N+1}\in \hat{\mathcal{G}}] \geq 1 - \alpha,
\end{equation}
where $\hat{\mathcal{G}}$ is adaptively selected corresponding to sensitive features.
Eq.~\ref{Eq:Adaptive equalized coverage} indicates that $C(X_{N+1})$ is well-calibrated for the selected group $\hat{\mathcal{G}}$ defined by these sensitive features.

%




\section{Methodology} \label{sec:method}

This section presents \name, a learning-based method that adaptively identifies groups affected by algorithmic bias and adjusts their prediction sets to achieve equalized coverage while preserving high informativeness.

\subsection{Learning Representation-Based Groups}\label{sec:learning groups}
\noindent\textbf{Optimization Objective.}
%
For any feature $x\in \mathcal{X}$, we write its encoding $z=f(x)\in \mathcal{Z}$, where $\mathcal{Z}$ is a latent representation space.  
Intuitively, $z$ denotes the latent representations of feature combinations of $x$. 
We introduce a random 
binary variable $S$ and $Z$ taking values in $\mathcal{Z}$ to formalize the membership of a group. Naturally, we consider a conditional distribution $P(S \given Z)$ 
such that the probability of $x\in \hat{\mathcal{G}}$ for a group $\hat{\mathcal{G}}$ is equal to $\mathbb{P}(S=1\given Z=f(x))$.
Our goal is twofold: (1) to learn an encoding $Z=f(X)$ that is maximally informative about $S$ and $X$, while (2) 
$Z$ 
does not reveal the identity of any individual $i$ in the sample (e.g., the calibration set). 

We apply the \emph{deep variational information bottleneck} (Deep VIB) method~\citep{alemi2016variational}. 
Specifically, for two random variables $X$ and $Y$ with the joint pdf (parameterized by $\theta$), $p_\theta(x,y)$, $I(X,Y;\theta)=\int p_\theta(x,y)\log\frac{p_\theta(x,y)}{p_\theta(x)p_\theta(y)}~dxdy$ denotes their mutual information. The optimization objective can be formalized as
\begin{equation*}
\max I(Z,S;\theta) + I(Z,X;\theta) - \beta I(Z,i;\theta),
\end{equation*}
where $i$ is a random variable to take any instance from the sample (e.g., in this paper, the calibration set $\mathcal{D}$) with a uniform distribution, $\theta$ is the model parameter, and $\beta$ is a weight hyperparameter. (We abbreviate $I(Z,S),I(Z,X),I(Z,i)$ as $I_1,I_2,I_3$ for convenience.)

By introducing $q_\phi(s|z),q_\varphi(x|z),r(z)$ as the variational approximation to $p_\theta(s|z),p_\theta(x|z),p(z)$ in respective terms, we perform variational inference 
and obtain
\begin{equation*}
\begin{aligned}
I_1 + I_2 - \beta I_3 &\geq \int p_\theta(x)p_\theta(s \given x)p_\theta(z \given x)\log q_\phi(s \given z) \d x \d s \d z \\
&\quad + \int p_\theta(x)p_\theta(z \given x)\log q_\varphi(x \given z) \d x \d z - \frac{\beta}{N}\sum_i\int p_\theta(z \given x_i)\log \frac{p_\theta(z \given x_i)}{r(z)}\d z.
\end{aligned}
\end{equation*}
(The details 
are given in Appendix~\ref{appendix:var}.)

In practice, we can approximate $p_\theta(x,s)=p_\theta(x)p_\theta(s|x)$ and $p_\theta(x)$ using the empirical distribution on the observations (e.g., the calibration set $\mathcal{D}$). 
As for $p_\theta(z|x)$, the reparameterization trick~\citep{kingma2013auto} forces $z$ to conform to a normal distribution which relies on $x_i$, and hence its deterministic function 
can be rewritten as $z=f(x,\epsilon)$ with an (auxiliary) noise variable $\epsilon$. 

Substituting all of these into the above equation, we obtain the following loss function 
\begin{equation}\label{Eq:Loss Funciton_1}
\mathcal{L} = -\frac{1}{N}\sum_{i=1}^N \left( \mathbb{E}_{\tilde{z}\sim f(x_i,\epsilon)}[\log q_\phi(s_i\given \tilde{z}) + \log q_\varphi(x_i\given \tilde{z})] - \beta D_{\mathrm{KL}}(p_\theta(z \given x_i)\Vert r(z))\right).
\end{equation}

Intuitively, the expected log-likelihood $\mathbb{E}_{\tilde{z}\sim f(x_i,\epsilon)}[\log q_\phi(s_i\given \tilde{z}) + \log q_\varphi(x_i\given \tilde{z})]$ allows the encoding $\tilde{z}$ to predict $s_i$ and regenerate $x_i$ simultaneously, whereas the Kullback-Leibler (KL) divergence aims to compress the remaining useless information of $\tilde{z}$.


\noindent\textbf{Instantiation.}
Eq.~\ref{Eq:Loss Funciton_1} 
suggests a natural design of the Encoder-Decoder architecture. 
In our method, the stochastic encoder with parameter $\theta$ has the form $p_\theta(z\given x)=\mathcal{N}(z\given f_\mu(x), f_\sigma(x))$, where $f_\mu(x)$ and $f_\Sigma(x)$ are two MLP networks to output the mean and variance of a normal distribution. We set $r(z)$ as a standard normal distribution $\mathcal{N}(0,1)$ and directly minimize the KL divergence term in Eq.~\ref{Eq:Loss Funciton_1} using the reparameterization trick.


We now concentrate on two decoders with parameters $\phi$ and $\varphi$. 
The instantiation of decoder with parameter $\varphi$ is trivial. For the expected log-likelihood $\mathbb{E}_{\tilde{z}\sim f(x_i,\epsilon)}[\log q_\varphi(x_i\given \tilde{z})]$ in Eq.~\ref{Eq:Loss Funciton_1}, we utilize the standard Mean Squared Error (MSE) as the reconstruction loss~\citep{kingma2013auto}.

Decoder with parameter $\phi$ aims at predicting $S$, 
which indicates whether the sample $X$ belongs to group $\hat{\mathcal{G}}$ or not.
Assume 
a set of observations, e.g., 
the calibration set $\mathcal{D}=\{(X_i,Y_i)\}_{i=1}^N$. 
The distribution $P(S\given X)$ 
can be viewed as a binary classifier $h$ comprising an encoder with parameter $\theta$ and a decoder with the parameter $\phi$. 
The result of $h$ on $\mathcal{D}$ is a vector $\mathbf{s}=[s_1,\dots,s_N]\in \{0,1\}^N$. 
Let $\hat{\mathcal{G}}_\mathbf{s}\subseteq \mathcal{D}$ 
denote the group determined by $\mathbf{s}$ on $\mathcal{D}$ and 
$\mathcal{H}$ be the family of all plausible $h$.
We extend an inequality~
\citep{cauchois2021knowing} 
to measure the deviation between the empirical coverage probability $\mathbb{P}_n$ on $\mathcal{D}$ and the oracle coverage probability $\mathbb{P}$.

\begin{proposition} \label{prop:finite}
Let the VC-dimension $VC(\mathcal{H})\leq R$ and $\delta=\lvert \hat{\mathcal{G}}_\mathbf{s} \rvert / N$ be the proportion of $\hat{\mathcal{G}}_\mathbf{s}$ to the entire dataset. Then the gap between the empirical coverage probability $\mathbb{P}_n$ on the observations and the oracle coverage probability $\mathbb{P}$ is upper bounded, i.e., there exists some constant $C_1$ for all $\tau > 0$ 
\begin{equation*}
\sup_{h\in\mathcal{H}}\left\{\lvert \mathbb{P}_n[Y \in C(X) \given X \in \hat{\mathcal{G}}_\mathbf{s}] - \mathbb{P}[Y \in C(X) \given X \in \hat{\mathcal{G}}_\mathbf{s}] \rvert \right\}  \leq C_1\sqrt{\frac{R\log N + \tau}{\delta N}}
\end{equation*} 
holds with probability at least $1-e^{-\tau}$.
\end{proposition}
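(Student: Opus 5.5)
The plan is to follow the argument behind the WSC bound of \citet{cauchois2021knowing}. The conditional probability is a ratio, so I would control the numerator and the denominator separately with a uniform relative-deviation (normalized) VC inequality, and then combine the two bounds.

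Fix the prediction set $C$, which is built independently of the sample used for the empirical measure, or else is treated as fixed. For $h\in\mathcal{H}$, write $G_h=\{x: h(x)=1\}$. Define
\[
A_h=\{(x,y): x\in G_h,\ y\in C(x)\}, \qquad B_h=\{(x,y): x\in G_h\}.
\]
The classes $\{A_h\}$ and $\{B_h\}$ are intersections of the fixed set $\{y\in C(x)\}$ with sets from a VC class of dimension at most $R$. Hence both have VC-dimension at most $R$, and their growth functions are at most $(eN/R)^R$.

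The first step is to apply the relative deviation inequality (Vapnik; see also Bousquet--Boucheron--Lugosi). It states that, with probability at least $1-e^{-\tau}$, simultaneously for all $h$ and for both classes,
\[
|\mathbb{P}_n(A)-\mathbb{P}(A)| \le c\sqrt{\mathbb{P}(A)\,\frac{R\log N+\tau}{N}} + c\,\frac{R\log N+\tau}{N}.
\]
A union bound over the two classes only changes the constants.

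The second step is the ratio decomposition. Write $p_A,p_B$ for the population probabilities and $\hat p_A,\hat p_B$ for the empirical ones, so that $\hat p_B=\delta$. Then
\[
\frac{\hat p_A}{\hat p_B}-\frac{p_A}{p_B}
= \frac{\hat p_A-p_A}{\hat p_B} + p_A\,\frac{p_B-\hat p_B}{\hat p_B\,p_B}.
\]
Using $p_A\le p_B$, the second term is at most $|p_B-\hat p_B|/\hat p_B$. Let $\epsilon=(R\log N+\tau)/N$. By the first step each numerator is at most $c\sqrt{p_B\,\epsilon}+c\,\epsilon$, since $p_A\le p_B$. It therefore suffices to show $\sqrt{p_B\,\epsilon}/\delta\lesssim\sqrt{\epsilon/\delta}$, which amounts to $p_B\lesssim\delta$.

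The third step, which I expect to be the main obstacle, is the comparison between $p_B$ and $\delta=\hat p_B$. I would apply the relative deviation bound to $B$ again: $p_B\le \delta + c\sqrt{p_B\,\epsilon}+c\,\epsilon$. Solving this quadratic inequality in $\sqrt{p_B}$ gives $p_B\le 2\delta + C\epsilon$. When $\delta\ge\epsilon$, this yields $p_B\lesssim\delta$ and hence the total bound $C_1\sqrt{\epsilon/\delta}$. When $\delta<\epsilon$, the right-hand side $C_1\sqrt{\epsilon/\delta}$ exceeds $1$ once $C_1\ge1$, and the inequality holds trivially because both probabilities lie in $[0,1]$.

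One technical point needs care: $\hat{\mathcal{G}}_\mathbf{s}$ and $\delta$ depend on $h$, and the supremum is taken over $h$. This is handled because the first step holds uniformly over $h$. The final bound is therefore applied pointwise in $h$ with $\delta=\delta(h)$, and the constant $C_1$ absorbs $c$, the $\log(eN/R)\le\log N$ simplification, and the factor coming from the union bound.
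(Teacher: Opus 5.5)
Your proposal is correct and follows essentially the paper's route: apply the Boucheron--Bousquet--Lugosi relative-deviation VC bound to the joint event $\{Y\in C(X),\,X\in\hat{\mathcal{G}}_\mathbf{s}\}$ and to the membership event $\{X\in\hat{\mathcal{G}}_\mathbf{s}\}$, then bound the ratio error by $(|\hat p_A-p_A|+|\hat p_B-p_B|)/\delta$. The one difference is that the paper's lemma has $\min\{\mathbb{P}_n,\mathbb{P}\}$ under the square root, which is at most $\mathbb{P}_n(X\in\hat{\mathcal{G}}_\mathbf{s})=\delta$, so your third step (showing $p_B\le C\delta$ via the quadratic inequality) is not needed there; on the other hand, your exact ratio identity and your case split on whether $\delta\ge\epsilon$ are tighter than the paper's write-up, whose intermediate ratio inequality does not hold as written and whose final display still carries an extra $\epsilon/\delta$ term that the stated bound omits.
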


Proposition~\ref{prop:finite} (cf.~Appendix~\ref{A:prop1} for proof) highlights two key directions for reducing the discrepancy between $\mathbb{P}_n$ and $\mathbb{P}$. First, a lower VC-dimension $VC(h)$ leads to a more precise estimation $\mathbb{P}_n$, implying that the classifier $h$ should exhibit limited complexity. Second, the selected group must be sufficiently large to ensure reliable estimation.

We maximize the expected log-likelihood $\mathbb{E}_{\tilde{z}\sim f(x_i,\epsilon)}[\log q_\phi(s_i \given \tilde{z})]$ in Eq.~\ref{Eq:Loss Funciton_1} via minimizing the expected empirical conditional coverage of the selected group $\hat{\mathcal{G}}$. 
The group $\hat{\mathcal{G}}$ on 
$\mathcal{D}$ is determined by a random vector $\mathbf{S}$, sampled from a joint Bernoulli distribution $B= \prod_{i=1}^N \operatorname{Bernoulli}(q_\phi(S_i=1 \given \tilde{z}))$.  
Hence, given $\mathcal{D}$, we formulate the following optimization problem :
\begin{equation}\label{Eq:minimize coverage}
\min_\phi \mathbb{E}_{\mathbf{S}\sim B}[\mathbb{P}_n[Y \in C(X) \given X \in \hat{\mathcal{G}}_\mathbf{S}]] \quad \text{s.t.} ~~\frac{1}{N}\sum_{i=1}^Nq_\phi(S_i=1 \given \tilde{z}) \geq \delta.
\end{equation}
In the above minimization problem, $\delta=\lvert \hat{\mathcal{G}}_\mathbf{s} \rvert/N$ denotes the the proportion of the selected group size to the whole dataset $\mathcal{D}$, and the decoder with parameter $\phi$ is a simple logistic regression model of the form $q_\phi(s \given \tilde{z})=\sigma (s\given f_m(\tilde{z}))$, where $\sigma$ is the sigmoid function and $f_m$ is a MLP network.

To solve the constrained optimization problem, we employ the Projected Gradient Descent (PGD), an iterative optimization algorithm~\citep{madry2017towards}, to optimize the parameter $\phi$.
In each training step, PGD performs a gradient descent update and then projects the new point onto the feasible set to ensure all constraints are satisfied.
Specifically, when the predictive distribution $q_\phi(s\given \tilde{z})$ does not meet the constraint $ \frac{1}{N}\sum_{i=1}^Nq_\phi(S_i=1\given \tilde{z}) \geq \delta$ after one back propagation process, we project it back onto the constraint-friendly space. 
Such a projection is equivalent to an $\ell_2$ distance minimization problem. Let $q_\phi^*(s_{1}\given \tilde{z}) \geq \dots\geq q_\phi^*(s_{N}\given \tilde{z})$ be the descending order of $\{q_\phi(s_i\given \tilde{z})\}_{i=1}^N$, and  the projection results in 
\begin{equation}\label{Eq:project}
q_\phi'(s_i\given \tilde{z}) = \min\left(1,q_\phi(s_i\given \tilde{z})+\frac{\omega}{2}\right),
\end{equation}
where $\omega = (\delta-k-\sum_{i=k+1}^Nq_\phi^*(s_{i}\given \tilde{z}))/ (N-k) \geq 0$, $k\in [N]$ is the greatest index to satisfy $q_\phi^*(s_{k}\given \tilde{z})+\omega/2\geq 1$ and $q_\phi^*(s_{k+1}\given \tilde{z})+\omega/2< 1$.
(The details are given in Appendix~\ref{A:opt}.)

Overall, we employ the empirical conditional coverage loss $\mathcal{L}_{\mathrm{CC}}$, the reconstruction loss $\mathcal{L}_{\mathrm{MSE}}$, and the KL divergence loss $\mathcal{L}_{\mathrm{KL}}$ to replace the corresponding terms in Eq.~\ref{Eq:Loss Funciton_1}, resulting in 
\begin{equation}\label{Eq:Loss Funciton_2}
\mathcal{L} = \mathcal{L}_{\mathrm{CC}} +  \mathcal{L}_{\mathrm{MSE}} - \beta\mathcal{L}_{\mathrm{KL}}.
\end{equation}

\subsection{Constructing the Adaptive Prediction Sets}\label{sec:Constructing}

After selecting the unfair group $\hat{\mathcal{G}}$, we proceed to construct the final prediction set with $\hat{\mathcal{G}}$. First, a standard conformal prediction set $C_\frak{m}(X_{N+1},\mathcal{D})$ is constructed using classic adaptive conformal prediction. Then, we perform $T$ sampling of the vector $\mathbf{s}_t$ $(t\in [T])$ from the joint Bernoulli distribution $B$ learned by models in Eq.~\ref{Eq:minimize coverage}. Each $\mathbf{s}_t$ defines a group $\hat{\mathcal{G}}_{\mathbf{s}_t}$, and such group is used as a calibration set to build a prediction set $C_\frak{m}(X_{N+1},\hat{\mathcal{G}}_{\mathbf{s}_t})$ as mentioned in Section~\ref{sec:prel}. The final prediction set for $Y_{N+1}$ is given by the union of all these sets:
\hide{
After selecting the group $\hat{\mathcal{G}}$, we show how to construct prediction sets with $\hat{\mathcal{G}}$. 
First, we construct $C_\frak{m}(X_{N+1},\mathcal{D})$ using the classic adaptive conformal prediction. 
We then repeat $T$ times to sample the vector $\mathbf{s}_t$ from the joint Bernoulli distribution $B$ in Eq.~\ref{Eq:minimize coverage}, and construct the prediction set $C_\frak{m}(X_{N+1},\hat{\mathcal{G}}_{\mathbf{s}_t})$ based on the selected group $\hat{\mathcal{G}}_{\mathbf{s}_t}$. 
Finally, 
the prediction set for $Y_{N+1}$ is given by
}
\begin{equation}\label{Eq:Construct}
C(X_{N+1})=C_\frak{m}(X_{N+1}, \mathcal{D})\cup \bigcup_{t=1}^T C_\frak{m}(X_{N+1},\hat{\mathcal{G}}_{\mathbf{s}_t}).
\end{equation}
Our approach \name is summarized in Algorithm~\ref{alg:framework}. To analyze its time complexity,
assume we have $M$ test instances and the complexity of conducting classic conformal prediction is $\mathcal{O}(N+M)$. 
Then, training the model to select groups is $\mathcal{O}(EN(\vert \theta \vert + \vert \phi \vert + \vert \varphi \vert))$, where $E$ is the number of epochs.
For all $M$ test instances, the time of selecting groups and constructing prediction sets is $\mathcal{O}(TN+TM)$.
The overall complexity of our \name is $\mathcal{O}(EN(\vert \theta \vert + \vert \phi \vert + \vert \varphi \vert) + T(N+M))$, which is 
$\mathcal{O}(N+M)$, disregarding constant multiplicative factors.
In contrast, the complexity of AFCP is $\mathcal{O}(N\log N+NM)$~\citep{zhou2024conformal}. 

\begin{algorithm} 
\caption{The overall framework of \name.} 
\label{alg:framework}
\begin{algorithmic}[1]
\Require calibration dataset $\mathcal{D}=\{X_i,Y_i\}_{i=1}^N$; test instance with feature $X_{N+1}$; list of $K$ sensitive features; pre-trained classifier $\hat{f}$; fixed rule to compute nonconformity scores; level $\alpha\in (0,1)$; selected group size proportion $\delta$; hyperparameter $\beta$; sampling times $T$;
\Ensure prediction set $C(X_{N+1})$; selected group set $\{\hat{\mathcal{G}}_{\mathbf{s}_t}\}_{t=1}^T$.
\State Construct classic conformal prediction set $C_\frak{m}(X_{N+1},\mathcal{D})$ based on the output of $\hat{f}$;
\For {each batch}
\State Calculate KL divergence loss $\mathcal{L}_{KL}$ with reparameterization trick;
\State Sample $\tilde{z}\sim f(x,\epsilon)$;
\State Calculate conditional coverage loss $\mathcal{L}_{CC}$ and reconstruction loss $\mathcal{L}_{MSE}$ using $\tilde{z}$;
\State Put all losses together in $\mathcal{L}$ as defined in Eq.~\ref{Eq:Loss Funciton_2};
\State Update parameters $\theta,\phi$ and $\varphi$ via the gradient descent of $\mathcal{L}$; 
\If{$\sum_{i=1}^Nq_\phi(S_i=1\given \tilde{z})<\delta\cdot N$}
\State Project each $q_\phi(S_i=1\given \tilde{z})$ to satisfy minimum set constraint using Eq.~\ref{Eq:project};
\EndIf
\EndFor
\For {$t\in [T]$}
\State Sample $\mathbf{s}_t\sim B$;{\Comment{\emph{B is a joint Bernoulli distribution mentioned in Eq.~\ref{Eq:minimize coverage}}}}
\State Construct $C_\frak{m}(X_{N+1},\hat{\mathcal{G}}_{\mathbf{s}_t})$;
\EndFor
\State Construct prediction set $C(X_{N+1})$ following Eq.~\ref{Eq:Construct}.
\end{algorithmic}
\end{algorithm}


The following result, proved in Appendix~\ref{A:thm}, ensures that the prediction set $C(X_{N+1})$ generated by \name achieves adaptive equalized coverage (Eq.~\ref{Eq:Adaptive equalized coverage}) over the selected group set $\{\hat{\mathcal{G}}_{\mathbf{s}_t}\}_{t=1}^T$.

\begin{theorem}\label{thm:equalized coverage}
If $\{(X_i,Y_i)\}^{N+1}_{i=1}$ are exchangeable, the prediction set $C(X_{N+1})$ and the selected group set $\{\hat{\mathcal{G}}_{\mathbf{s}_t}\}_{t=1}^T$ output by Algorithm~\ref{alg:framework} satisfy the adaptive equalized coverage defined in Eq.~\ref{Eq:Adaptive equalized coverage},
and this guarantee still holds when the selected groups are defined by a more complex combination of features (e.g., non-linear) compared to AFCP.
\end{theorem}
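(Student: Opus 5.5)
The plan is to reduce the claim to the standard split-conformal quantile lemma applied inside each selected group, and then use the union structure of Eq.~\ref{Eq:Construct} to transfer the guarantee to $C(X_{N+1})$. Fix a sampled vector $\mathbf{s}_t$ and the corresponding group $\hat{\mathcal{G}}_{\mathbf{s}_t}$. The key requirement is that membership of the test point is decided by the same rule as for the calibration points. The group is induced by the learned classifier $h$ (encoder $\theta$ plus decoder $\phi$), so we extend it to $\mathcal{X}$ as $\{x : h(x)=1\}$, with the Bernoulli draw for $X_{N+1}$ made with the same randomization as for the calibration points.

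\textbf{Step 1: a symmetric group-selection rule.} This is the step I expect to be the main obstacle. Exchangeability of the full sequence $\{(X_i,Y_i)\}_{i=1}^{N+1}$ is preserved conditionally on $h$ only if $h$ is a symmetric function of the $N+1$ points, or is independent of them. In Algorithm~\ref{alg:framework}, however, $h$ is trained on $\mathcal{D}$ alone, which breaks the symmetry. I would handle this in one of two ways.
\begin{enumerate}
\item Train $h$ on the augmented, label-free or label-imputed set $\mathcal{D}\cup\{X_{N+1}\}$, using only features and scores for each candidate label $y$. This parallels AFCP's treatment of the test point, where selection is repeated for each placeholder label $y$.
\item Take $h$ to be fitted on data independent of $\mathcal{D}$, for example a separate split.
\end{enumerate}
In either case, conditional on $h$ and the Bernoulli noise $\epsilon$, the indicators $S_i=h(X_i)$ are a fixed measurable function applied to each point. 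The subsequence of points with $S_i=1$ is then exchangeable conditional on its size and on $S_{N+1}=1$.

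\textbf{Step 2: within-group coverage.} Apply the usual quantile argument on the event $X_{N+1}\in\hat{\mathcal{G}}_{\mathbf{s}_t}$. The rank of $V(X_{N+1},Y_{N+1})$ among the scores in $\hat{\mathcal{G}}_{\mathbf{s}_t}\cup\{N+1\}$ is uniform, up to ties, so
\[
\mathbb{P}\left[Y_{N+1}\in C_\frak{m}(X_{N+1},\hat{\mathcal{G}}_{\mathbf{s}_t}) \given X_{N+1}\in\hat{\mathcal{G}}_{\mathbf{s}_t}\right]\ge 1-\alpha .
\]
Here we use the $(1-\alpha)(1+1/|\hat{\mathcal{G}}_{\mathbf{s}_t}|)$-quantile. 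When $|\hat{\mathcal{G}}_{\mathbf{s}_t}|$ is too small, the threshold defaults to the trivial set $[L]$, which preserves the bound. We first condition on the group size and then average over it.

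\textbf{Step 3: transfer to the union and to general groups.} Since $C(X_{N+1})\supseteq C_\frak{m}(X_{N+1},\hat{\mathcal{G}}_{\mathbf{s}_t})$ for every $t$, monotonicity gives Eq.~\ref{Eq:Adaptive equalized coverage} for each selected group. Because $C(X_{N+1})\supseteq C_\frak{m}(X_{N+1},\mathcal{D})$, marginal coverage also holds. Finally, the argument in Steps 1 and 2 used only that $h$ is a measurable map from $\mathcal{X}$ to $\{0,1\}$ applied symmetrically, and never that it is a conjunction of single features. It therefore covers nonlinear representation-based groups $h=\sigma\circ f_m\circ f(\cdot,\epsilon)$, which gives the final clause comparing with AFCP. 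Proposition~\ref{prop:finite} is not needed for validity; it only quantifies how well the empirical group coverage tracks $\mathbb{P}$.
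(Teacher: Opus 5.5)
Your argument is sound, but it takes a genuinely different route from the paper. The paper never re-derives coverage from exchangeability. It invokes Theorem~1 of AFCP as a black box, substituting $X_{N+1}\in\hat{\mathcal{G}}_{\mathbf{s}}$ and $X_{N+1}\in\hat{\mathcal{G}}^o_{\mathbf{s}}$ for AFCP's conditioning events $\phi(X_{N+1},\hat A(X_{N+1}))$ and $\phi(X_{N+1},\hat A^o(X_{N+1}))$. This rests on AFCP's assumption that the selection attains an imaginary oracle group. The paper then argues the nonlinear clause through expressiveness (VC dimension $2$ for AFCP versus $\mathcal{O}(M)$ for \name): the oracle group is more plausibly inside \name's candidate class.

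You instead prove group-conditional (Mondrian) split-conformal validity directly. You isolate the symmetry requirement that the paper's reduction hides inside the oracle assumption, and you transfer the bound to $C(X_{N+1})$ by monotonicity of the union in Eq.~\ref{Eq:Construct}. Your route gives a self-contained proof with an explicit, checkable hypothesis: a selection map independent of the calibration points. It also shows that validity does not depend on the hypothesis class at all. So the nonlinear clause is immediate, and the VC comparison really concerns which group gets protected, not validity.

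The cost is scope. Your proof covers a sample-split variant, not Algorithm~\ref{alg:framework} verbatim, which reuses $\mathcal{D}$ both to train $h$ and to calibrate. The 5:5 split of $\mathcal{D}$ in Appendix~\ref{A:hyperparameter} fits your variant only if the held-out half is the one used for calibration. The paper keeps all of $\mathcal{D}$ but pays with an unverified assumption.

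Two refinements. First, do not condition on the realized noise vector. With the product Bernoulli $B$ and per-point reparameterization noise, conditioning makes $S_i=g_i(X_i)$ an index-dependent rule, which is not permutation-invariant, so the Mondrian argument no longer applies. Instead, attach i.i.d.\ randomization $\xi_i$ to each point. Then the triples $(X_i,Y_i,\xi_i)$ are exchangeable and $S_i=g(X_i,\xi_i)$ for one fixed $g$ given $h$.

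Second, your option~1 does not close as written. $\mathcal{L}_{\mathrm{CC}}$ needs labels, so symmetric training on $\mathcal{D}\cup\{X_{N+1}\}$ requires a placeholder $y$ and a union over $y$. The resulting guarantee then conditions on the group selected with the true $Y_{N+1}$. That is precisely AFCP's oracle group, which brings back the paper's assumption, unless you switch to a label-free selection objective. Option~2 is what carries your proof.
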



\section{Experiments} \label{sec:exp}
\subsection{Experimental Setup}\label{sec:settings}
\noindent\textbf{Baselines.}
We select the classic CP method Marginal~\citep{romano2020classification} for classification, the initial CP method Partial~\citep{romano2020malice} considering equalized coverage, and the state-of-the-art method AFCP~\citep{zhou2024conformal} as our baselines.
The vanilla version of AFCP is designed to pick at most one sensitive feature (referred to as AFCP1). 
We also extend AFCP1 to select two sensitive features (referred to as AFCP2), given unreal, strong 
prior knowledge. Note that in real-world applications, it is typically unknown exactly how many features the unfair group may correspond to.

\noindent\textbf{Evaluation Metrics.}
To evaluate the prediction sets $C(X_{N+1})$ produced by different CP methods, we use the coverage conditional on a specific group (referred to as Group Coverage), Average Coverage (viz., marginal coverage), and Average Size (viz., efficiency) as the metrics.

Additionally, we propose a new conditional coverage metric, viz., $\text{WSC}^+$, to capture groups defined by complicated (nonlinear) feature relationships. 
Traditional conditional coverage metric~\citep{cauchois2021knowing} considers the worst coverage over all slabs containing $\delta$ mass on the observations, which is defined as
\begin{equation*}
\text{WSC}_n(C,\mathbf{v}):=\inf_{a<b}~ \left\{\mathbb{P}_n(Y\in C(X)\given a \leq \mathbf{v}^TX \leq b) ~~\text{s.t.}~~ \mathbb{P}_n(a \leq \mathbf{v}^TX \leq b) \geq \delta\right\},
\end{equation*}
where $\mathbf{v}\in \mathbb{R}^d$ and $a<b\in \mathbb{R}$.

To strengthen the 
WSC metric, we replace the linear mapping $\mathbf{v}^T$ in the above equation with an 
arbitrary non-linear function $\pi$, giving rise to $\text{WSC}^+$, i.e.,
\begin{equation}\label{Eq:wsc+}
\text{WSC}^+_n(C,\pi):=\inf_{a<b}~ \left\{\mathbb{P}_n(Y\in C(X)\given a\leq \pi(X) \leq b) ~~\text{s.t.}~~ \mathbb{P}_n(a\leq \pi(X) \leq b) \geq \delta\right\}.
\end{equation}
Assume a quadratic function $\pi(\mathbf{x})=\mathbf{x}^T\mathbf{W}\mathbf{x}+\mathbf{v}^T\mathbf{x}$, where $\mathbf{W}\in \mathbb{R}^{d\times d}$ and $\mathbf{v}\in \mathbb{R}^d$. We uniformly draw 1,000 samples $\pi_j=\{\mathbf{W}_j,\mathbf{v}_j\}$ to compute the worst-slab coverage for each $\pi_j$ on the test instances. Following~\cite{cauchois2021knowing}, we use the grid search to achieve the optimal $a,b$ satisfying the desiderata as well.
In this case, we have a lower bound for our metric $\text{WSC}^+$.

\begin{proposition}\label{prop:metric}
Let $\pi(\mathbf{x})=\mathbf{x}^T\mathbf{W}\mathbf{x}+\mathbf{v}^T\mathbf{x}$ be a quadratic function and $\Pi$ be a parameter space of $\pi$. Then, if $C$ effectively provides conditional coverage at level $1-\alpha$, we have 
\begin{equation}\label{Eq:wsc+ bound}
\text{WSC}^+_n=\inf_{\pi\in \Pi}~\text{WSC}^+_n(C, \pi) \geq 1-\alpha-\mathcal{O}(1)\sqrt{\frac{\mathcal{O}(d^2)\log N}{\delta N}}.
\end{equation}
\end{proposition}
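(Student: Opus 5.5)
The plan is to reduce Proposition~\ref{prop:metric} to a uniform deviation bound of the same type as Proposition~\ref{prop:finite}. The relevant function class is the family of ``quadratic slabs''
\[
\mathcal{S}=\{\,\mathbf{x}\mapsto \mathbb{1}[a\le \mathbf{x}^T\mathbf{W}\mathbf{x}+\mathbf{v}^T\mathbf{x}\le b] : \mathbf{W}\in\mathbb{R}^{d\times d},\ \mathbf{v}\in\mathbb{R}^d,\ a<b\,\}.
\]
Each such set is an intersection of two sets of the form $\{\mathbf{x}: \pm(\mathbf{x}^T\mathbf{W}\mathbf{x}+\mathbf{v}^T\mathbf{x})\mp c\ge 0\}$. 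Every such set is a halfspace in the lifted feature space $\psi(\mathbf{x})=(x_ix_j)_{i\le j}\oplus \mathbf{x}\oplus 1$, whose dimension is $D=\mathcal{O}(d^2)$.

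\textbf{Step 1: VC bound.} Halfspaces in $\mathbb{R}^D$ have VC dimension $D+1$. By the standard bound for intersections of two classes, $VC(\mathcal{S})\le c\,D\log D$. I would then simply write $R=\mathcal{O}(d^2)$, absorbing the log factor. One could instead treat a slab directly as $\{|\pi(\mathbf{x})-m|\le r\}$, i.e.\ a sublevel set of a degree-4 polynomial. That route avoids the log but increases the dimension, so I would take the intersection route.

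\textbf{Step 2: uniform deviation on slabs of mass at least $\delta$.} Apply Proposition~\ref{prop:finite} with $\mathcal{H}=\mathcal{S}$, so that the selected group is the slab. With probability at least $1-e^{-\tau}$, uniformly over all slabs $G$ with $\mathbb{P}_n(X\in G)\ge\delta$,
\[
\bigl|\mathbb{P}_n[Y\in C(X)\given X\in G]-\mathbb{P}[Y\in C(X)\given X\in G]\bigr|\le C_1\sqrt{\frac{\mathcal{O}(d^2)\log N+\tau}{\delta N}} .
\]
Strictly, Proposition~\ref{prop:finite} is stated with the empirical mass $\delta$. The slab constraint in Eq.~\ref{Eq:wsc+} is also empirical, so the bound applies directly. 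If the proof of Proposition~\ref{prop:finite} instead needs population mass, I would insert a relative-deviation (Bernstein/VC) bound showing $\mathbb{P}(G)\ge\delta/2$ whenever $\mathbb{P}_n(G)\ge\delta$ and $\delta N\gtrsim R\log N$. This changes only constants.

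\textbf{Step 3: use the coverage hypothesis.} If $C$ provides conditional coverage at level $1-\alpha$, then $\mathbb{P}[Y\in C(X)\given X=\mathbf{x}]\ge 1-\alpha$ pointwise. Integrating over any $G$ of positive mass gives $\mathbb{P}[Y\in C(X)\given X\in G]\ge 1-\alpha$. Combining this with Step 2, every feasible slab $G$ in Eq.~\ref{Eq:wsc+}, for every $\pi\in\Pi$, satisfies $\mathbb{P}_n[Y\in C(X)\given X\in G]\ge 1-\alpha-C_1\sqrt{(\mathcal{O}(d^2)\log N+\tau)/(\delta N)}$. Since this holds uniformly, the bound survives taking the infimum over $a<b$ and then over $\pi\in\Pi$. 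Fixing $\tau$ as a constant (e.g.\ $\tau=\log(1/\text{failure prob})$) and absorbing it into the $\mathcal{O}(1)$ yields Eq.~\ref{Eq:wsc+ bound}, with high probability.

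\textbf{Main obstacle.} I expect the hardest part to be Step 2's conditioning issue. The ratio $\mathbb{P}_n(A\cap G)/\mathbb{P}_n(G)$ must be controlled uniformly, which requires the deviation of both numerator and denominator to scale like $\sqrt{\mathbb{P}(G)}$ rather than a constant. This is exactly what produces the $1/\sqrt{\delta}$ rather than $1/\delta$. I would first try to invoke Proposition~\ref{prop:finite} as a black box. Failing that, I would redo the argument of \citet{cauchois2021knowing}: apply a normalized (relative) VC inequality to the classes $\{G\}$ and $\{A\cap G\}$, where $A=\{Y\in C(X)\}$ is fixed because $C$ is fit on independent data. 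Adding the fixed set $A$ does not increase the VC dimension beyond $\mathcal{O}(d^2)$.
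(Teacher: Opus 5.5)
Your proposal follows the same route as the paper's proof. You apply Proposition~\ref{prop:finite} uniformly over the groups $\{a\le\pi(X)\le b\}$ and bound the VC dimension through the lifted feature space of dimension $D=d(d+1)/2+d=\mathcal{O}(d^2)$. You then use the coverage hypothesis to get population slab coverage at least $1-\alpha$, and pass the uniform bound through the infimum over $a<b$ and $\pi\in\Pi$. The paper simply equates the VC dimension with $D$, which counts one-sided quadratic thresholds rather than slabs, and it assumes slab coverage equals $1-\alpha$. Your two-sided slab class and your pointwise-to-slab integration are the more careful versions of those steps.

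The one step that fails as written is the end of Step~1. Having bounded $VC(\mathcal{S})\le cD\log D$, you cannot absorb the logarithm, because $D\log D=\Theta(d^2\log d)$ is not $\mathcal{O}(d^2)$. As written, your argument therefore gives only $\sqrt{d^2\log d\,\log N/(\delta N)}$.

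The fix is that in the intersection lemma the logarithm depends on the number $k$ of intersected classes, not on the dimension. Each slab is $H_1\cap H_2$ with $H_1,H_2$ halfspaces in $\mathbb{R}^D$. So the number of traces of $\mathcal{S}$ on $m$ points is at most the square of the number of halfspace traces. By Sauer--Shelah this is at most $(em/(D+1))^{2(D+1)}$, which is strictly below $2^m$ whenever $m\ge 10(D+1)$. Hence $VC(\mathcal{S})<10(D+1)=\mathcal{O}(d^2)$ with no extra factor. The same conclusion follows from the standard $k$-fold intersection bound $2d_0k\log_2(3k)$ with base dimension $d_0=D+1$ and $k=2$. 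With that correction the rest of your argument goes through.
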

The proof is given in Appendix~\ref{A:prop2}.

\begin{figure}[t]
\begin{minipage}{0.45\linewidth}
\centering
\includegraphics[width=0.92\columnwidth]{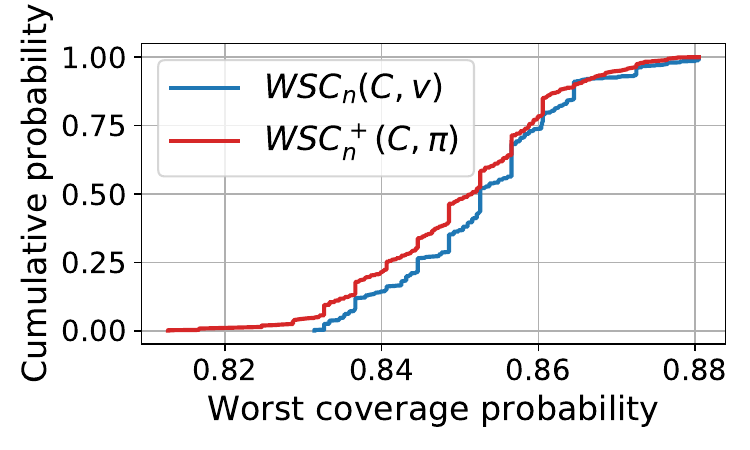}
\vspace{-0.7em}
\caption{
CDF of Conditional Coverage ($\delta=0.5$), which plots the respective cumulative probability curves of different worst-slab coverage discovered by $\text{WSC}_n(C,\mathbf{v})$ and $\text{WSC}_n^+(C,\pi)$ over 1,000 samplings. The red curve is always above the blue curve, indicating that our $\text{WSC}_n^+(C,\pi)$ finds more groups with the poor coverage than $\text{WSC}_n(C,\mathbf{v})$.\label{F:cdf}
}
\end{minipage}
\hfill
\begin{minipage}{0.53\linewidth}
\centering
\vspace{4pt}
\resizebox{\linewidth}{!}{
\begin{sc}
\begin{tabular}{lccccc}
\toprule
{Metric} & $\delta=0.1$ & $\delta=0.2$  & $\delta=0.3$ & $\delta=0.4$ & $\delta=0.5$ \\
 \midrule
\multirow{2}*{$\text{WSC}_n$} & 0.616 & 0.748 & 0.793 & 0.822 & 0.842 \\
    & (0.053) & (0.037) & (0.025) & (0.023) & (0.023) \\
\midrule
\multirow{2}*{$\text{WSC}^+_n$} & 0.582 & 0.674 & 0.750 & 0.800 & 0.829 \\
     & (0.047) & (0.048) & (0.034) & (0.028) & (0.024) \\
\midrule
Imp. & -5.52\% & -9.89\% & -5.42\% & -2.68\% & -1.54\%  \\
\bottomrule
\end{tabular}
\end{sc}}
\vspace{8pt}
\captionof{table}{
Performance of $\text{WSC}_n$ and $\text{WSC}_n^+$ metrics w.r.t. different $\delta$. We repeat the experiment 10 times, and report the average results (the value in () is the standard deviation). Smaller coverage is better. Our metric $\text{WSC}^+_n$ performs better than $\text{WSC}_n$ by up to $9.89\%$ to mine the group with the minimum worst-slab
coverage (defined in Eq.~\ref{Eq:wsc+ bound}).
}
\label{T:class condition distance}
\end{minipage}  
\vspace{-0.8em}
\end{figure}

To demonstrate the advantages of the new metric $\text{WSC}^+$, we randomly draw the features $X\in [0,1]^{10}$ from a uniform distribution and create a simple dataset for classification as described in Appendix~\ref{A:hyperparameter}. 
Note that we define the group needed to be protected to satisfy $(X[0]\geq0.1)\oplus(X[1]\geq0.1)=\text{True}$.
We respectively plot the Cumulative Distribution Functions (CDF) of $\text{WSC}_n(C,v)$ and $\text{WSC}^+_n(C,\pi)$ over 1,000 samples $\pi_j$ when $\delta=0.5$ in Fig.~\ref{F:cdf}, and observe that our $\text{WSC}^+_n(C,\pi)$ always reveals the groups with the worse coverage than that of $\text{WSC}_n(C,v)$, which can be attributed to 
representational capability of the nonlinear function $\pi$ in $\text{WSC}^+_n(C,\pi)$.

Moreover, we also list the average results of two metrics, $\text{WSC}_n$ and $\text{WSC}^+_n$, as $\delta$ increases over 10 repeated experiments in Table~\ref{T:class condition distance}.
Similar to Fig.~\ref{F:cdf}, the minimum worst-slab coverage found by our $\text{WSC}_n^+$ is smaller than that found by  $\text{WSC}_n$ by up to $9.89\%$. As $\delta$ increases, the condition coverage tends to the marginal coverage (0.9), and the gap between $\text{WSC}_n$ and $\text{WSC}_n^+$ narrows, as expected.

\hide{
Moreover, we also list the results of $\text{WSC}_n$ and $\text{WSC}^+_n$ as $\delta$ increases in Table~\ref{T:class condition distance}.
Similar to Fig.~\ref{F:cdf}, the worst-slab coverage found by $\text{WSC}_n^+$ is smaller than that found by  $\text{WSC}_n$ by up to $9.89\%$. As $\delta$ increases, the condition coverage tends to the marginal coverage (0.9), and the gap between $\text{WSC}_n$ and $\text{WSC}_n^+$ narrows, as expected. 
}


\noindent\textbf{Implementations.}\footnote{Our code is publicly available at https://github.com/Xusr1123/FaReG.}
All the experiments are carried out on NVIDIA GeForce RTX 3090. We repeat each experiment 10 times and report the average to suppress randomness. 
We set $\delta=0.5$ for $\text{WSC}_n^+$ by default. More implementation details, such as hyperparameters and training settings, are presented in Appendix~\ref{A:hyperparameter}.

\subsection{Synthetic Data}\label{sec:synthetic data}

We evaluate our method on synthetic data designed to mimic a mental illness diagnosis scenario. The dataset includes six possible labels: Depression, Anxiety Disorders, Bipolar Disorder, Schizophrenia, Anorexia, and Post-Traumatic Stress Disorder (PTSD).
Each sample contains four sensitive features—Age Group, Region, Gender, and Color—along with six non-sensitive features independently sampled from a uniform distribution within a value range $[0,1]$. The sensitive features are generated as follows:
(1) Gender is uniformly drawn from \textit{\{Female, Male\}};
(2) Color is uniformly drawn from \textit{\{Red, Blue\}};
(3) Age Group is drawn from \textit{\{Child, Youth, Middle, Elder\}} with equal probability;
(4) Region follows a fixed cyclical sequence: Asia, Europe, Africa, America, Oceania.

We then generate true labels $Y$ for the dataset, where diagnosis is more challenging for a specific subgroup defined by the Exclusive NOR (XNOR) operation (cf. Appendix~\ref{A:hyperparameter}).
Specifically, we assume $X[0]$ is Color, $X[1]$ is Gender, and $X[2]$ is any non-sensitive feature, and define $Y$ based solely on these three attributes. Through the label generation, we have the following subgroup $X[0]\odot X[1]=\textit{True}$: \textit{Color=Red (True) \& Gender=Female (True)} or \textit{Color=Blue (False) \& Gender=Male (False)}, simulating a real-world situation that algorithmic biases occur on this subgroup.

\begin{figure}[t]
    \centering
    \begin{minipage}[t]{\textwidth}
        \centering
        \begin{subfigure}[t]{0.24\textwidth}
            \centering
            \includegraphics[width=\linewidth]{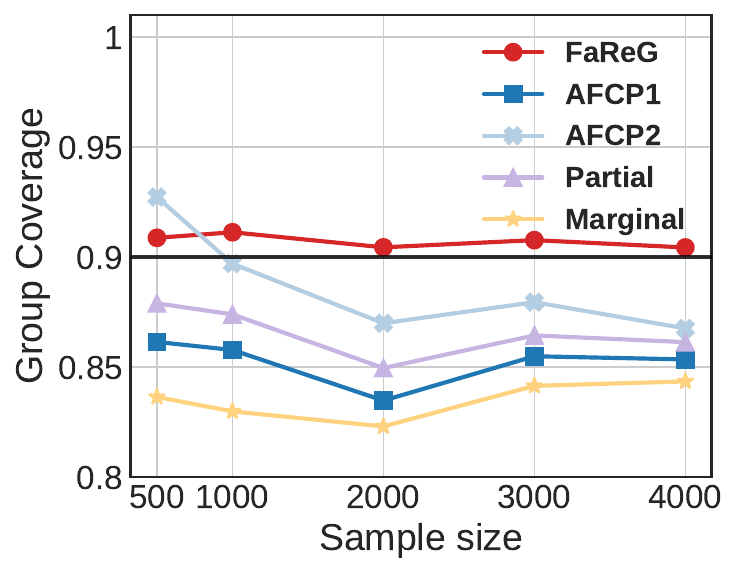}
            \caption{Group Coverage}\label{F:exp1_1}
        \end{subfigure}
        \hfill
        \begin{subfigure}[t]{0.24\textwidth}
            \centering
            \includegraphics[width=\linewidth]{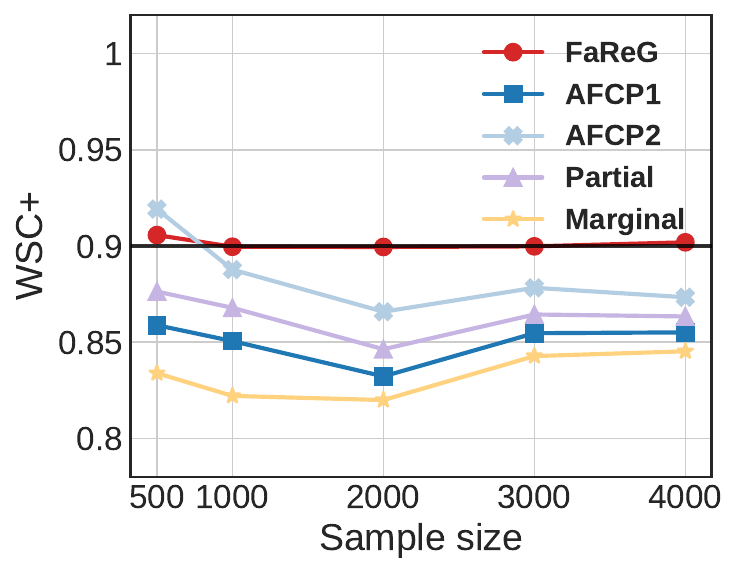}
            \caption{$\text{WSC}^+_n$}\label{F:exp1_2}
        \end{subfigure}
        \hfill
        \begin{subfigure}[t]{0.24\textwidth}
            \centering
            \includegraphics[width=\linewidth]{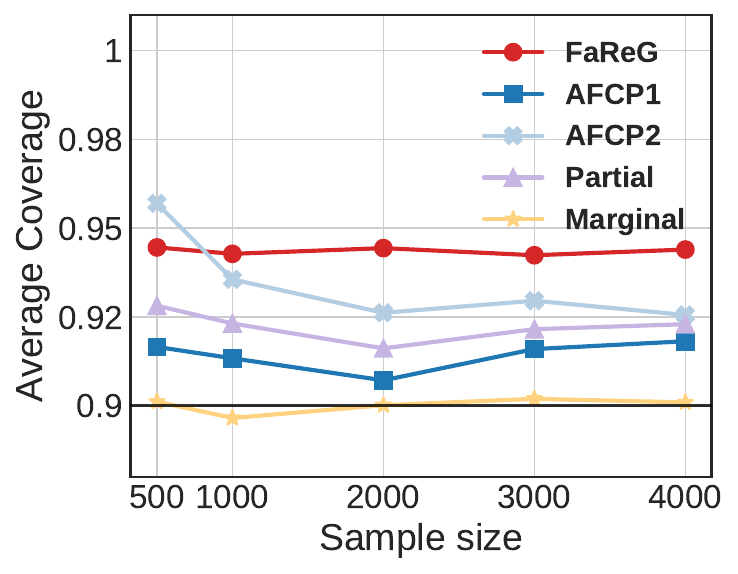}
            \caption{Average Coverage}\label{F:exp1_3}
        \end{subfigure}
        \hfill
        \begin{subfigure}[t]{0.24\textwidth}
            \centering
            \includegraphics[width=\linewidth]{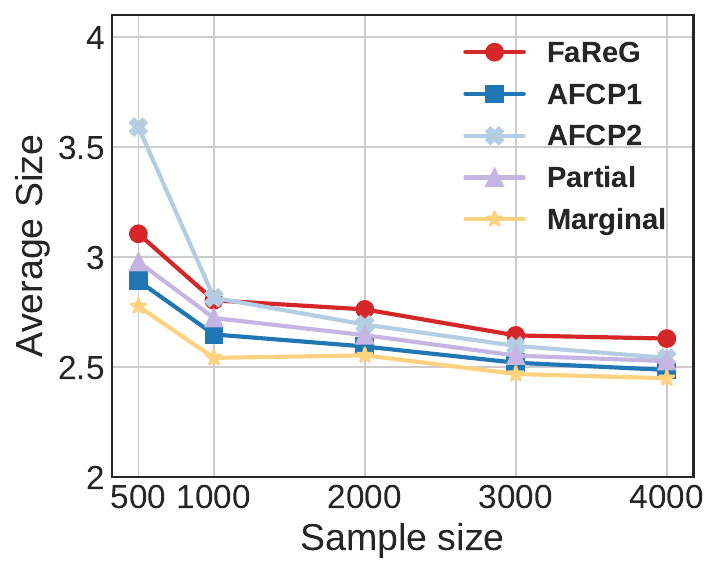}
            \caption{Average Size}\label{F:exp1_4}
        \end{subfigure}
        \vskip -0.5em
        \caption{Performance of prediction sets produced by different CP methods on synthetic data w.r.t. the total number of training and calibration data instances. 
        Only our \name achieves the ideal conditional coverage (0.9), and meanwhile, does not sacrifice too much information (set sizes) compared to baselines.  \label{F:exp1}
        }
    \end{minipage}
\end{figure}

\begin{figure}[t]
    \centering
    \begin{minipage}[t]{\textwidth}
        \centering
        \begin{subfigure}[t]{0.24\textwidth}
            \centering
            \includegraphics[width=\linewidth]{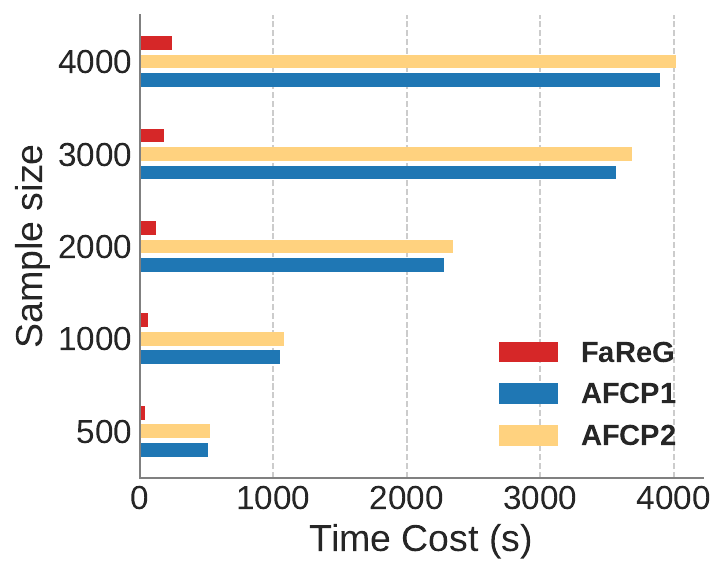}
            \caption{Time Cost}\label{F:time}
        \end{subfigure}
        \hfill
        \begin{subfigure}[t]{0.24\textwidth}
            \centering
            \includegraphics[width=\linewidth]{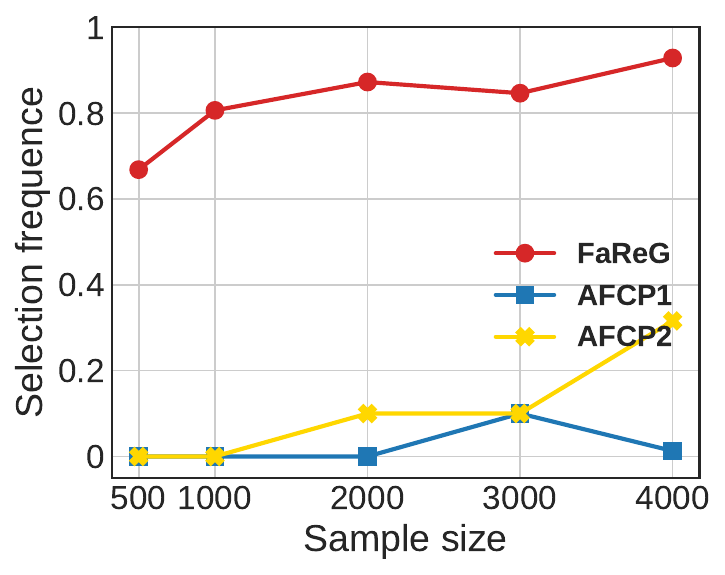}
            \caption{Selects $X[0]$}\label{F:attr8}
        \end{subfigure}
        \hfill
        \begin{subfigure}[t]{0.24\textwidth}
            \centering
            \includegraphics[width=\linewidth]{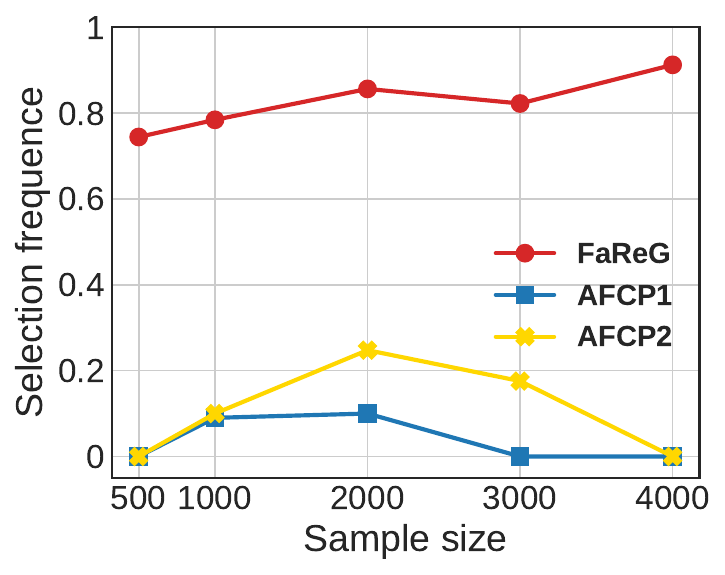}
            \caption{Selects $X[1]$}\label{F:attr9}
        \end{subfigure}
        \hfill
        \begin{subfigure}[t]{0.24\textwidth}
            \centering
            \includegraphics[width=\linewidth]{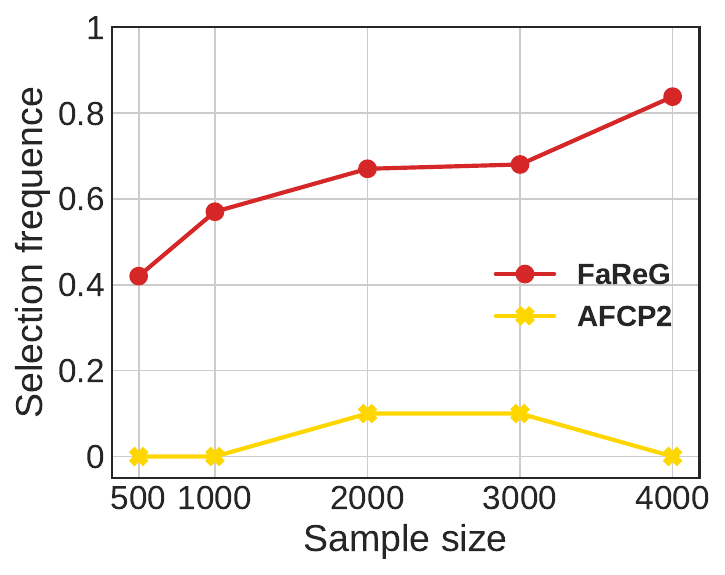}
            \caption{Selects $X[0] \&X[1]$}\label{F:attr8_9}
        \end{subfigure}
        \vskip -0.5em
        \caption{Fig.~(a) reports the running time of different CP methods with the increasing total number of training and calibration data instances. 
        Fig.~(b)--(d) are the results of the selection frequency of target features $X[0]$ and $X[1]$.
        As the sample size increases, our method becomes more consistent with target features.}
        \label{F:exp2}
    \end{minipage}
\end{figure}


\hide{
}

Fig.~\ref{F:exp1} depicts the results of conditional coverage, average coverage (marginal coverage), and average prediction set size (efficiency), respectively.
For conditional coverage, Group Coverage is the coverage on the subgroups defined by XNOR operation as mentioned in data construction, and we compute $\text{WSC}^+_n$ on four predefined sensitive features.
In Fig.~\ref{F:exp1_1} and~\ref{F:exp1_2}, 
our \name is the only one that always achieves valid coverage (greater than 0.9) for the targeted group with varying sample sizes. 
Although the conditional coverage of AFCP2 also exceeds 0.9 when the sample size reaches 500, as shown in Fig.~\ref{F:exp1_4}, it produces considerably larger prediction sets, 
which is less informative for 
decision-making.

\hide{
To further demonstrate the superiority of our method, we report the increase in conditional coverage per efficiency. 
Specifically, in Fig.~\ref{F:exp1_1} and~\ref{F:exp1_4}, we compute the ratio of Group Coverage difference between CP methods,  and Marginal to Efficiency difference (set size) denoted as $\Delta G/\Delta E$. Similarly, we have $\Delta W/\Delta E$ for $\text{WSC}^+_n$. 
The results are listed in Table~\ref{T:C/E}.
We observe that our method outperforms all competitors as $N$ increases, which indicates that \name precisely captures the unfair groups with poor conditional coverage.
}

In Fig.~\ref{F:time}, we compare the average running time of different CP methods (e.g., the wall-clock time of the entire \name pipeline) over 10 repeated experiments, and \name significantly reduces the time cost. 
Actually, the training of the encoder-decoder network occupies most of the time cost (e.g., in one trial with a sample size of 2,000, the training step required 161.3 seconds, while the rest took only 0.8 seconds). Additionally, the training time of the encoder-decoder network scales linearly w.r.t the sample size since we fix the epoch number and batch size in our algorithm.
This result is consistent with the analysis in Section~\ref{sec:Constructing}.

To determine which features are selected by our method, we analyze the predictive variable $S$ and the reconstructed feature $\hat{X}$ by perturbing the latent representation $Z$, following the Beta-VAE approach~\citep{higgins2017beta}. Specifically, we impose a slight perturbation (e.g., $\pm 0.001$) on each dimension of $Z$ and identify the dimension that most influences $S$. Given this influential dimension and prior knowledge (as in AFCP2) that there are exactly two target features, we compute the change ratios for each dimension of $\hat{X}$ before and after perturbation, and select the two features with the top-2 maximum change ratios.

Figures~\ref{F:attr8}, \ref{F:attr9}, and \ref{F:attr8_9} respectively report the frequency of selecting $X[0]$ or $X[1]$ individually, and that of selecting both $X[0]$ and $X[1]$ simultaneously. The results demonstrate that our approach captures more target features than the baselines, and this advantage becomes more pronounced as the sample size increases.

\hide{
To figure out which feature \name actually chooses, we observe the predictive $S$ and reconstruction $\hat{X}$ via changing $z$, following the Beta-VAE method~\citep{higgins2017beta}. Specifically, we manually impose a slight perturbation (e.g., $\pm 0.001$) on each 
dimension of $Z$ and pick the most influential dimension to $S$. Given this dimension of $Z$ and a prior knowledge as AFCP2—there are totally two target features, we compute the differences of $\hat{X}$'s each dimension before and after perturbation, and select two features with the top-$2$ max change ratio.
Fig.~\ref{F:attr8}, \ref{F:attr9} and \ref{F:attr8_9} respectively report the frequency of  selecting $X[0]/X[1]$ solely, and that of simultaneously selecting $X[0]\&X[1]$.
It can be observed that our approach captures more target features than baselines, and this advantage is more evident when the sample size increases.
}

Additionally, we present the results of parameter sensitivity and group visualization in Appendix~\ref{A:Parameter sensitivity} and~\ref{A:vision}, respectively.


\subsection{Nursery Data}\label{sec:nursery data}

\begin{figure}[t]
    \centering
    \begin{minipage}[t]{\textwidth}
        \centering
        \begin{subfigure}[t]{0.24\textwidth}
            \centering
            \includegraphics[width=\linewidth]{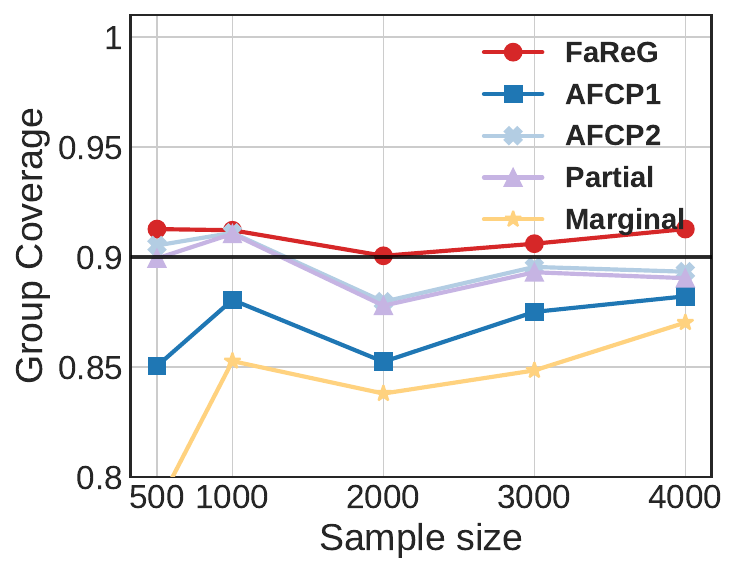}
            \caption{Group Coverage}\label{F:nursery_exp1_1}
        \end{subfigure}
        \hfill
        \begin{subfigure}[t]{0.24\textwidth}
            \centering
            \includegraphics[width=\linewidth]{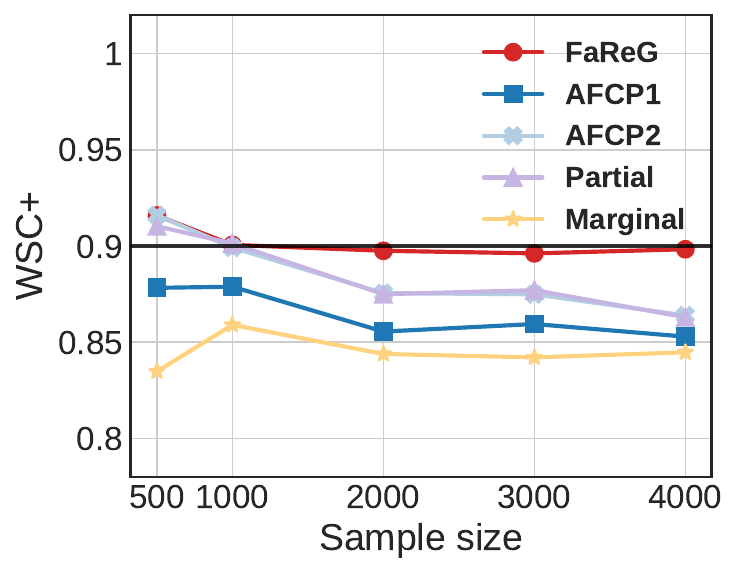}
            \caption{$\text{WSC}^+_n$}\label{F:nursery_exp1_2}
        \end{subfigure}
        \hfill
        \begin{subfigure}[t]{0.24\textwidth}
            \centering
            \includegraphics[width=\linewidth]{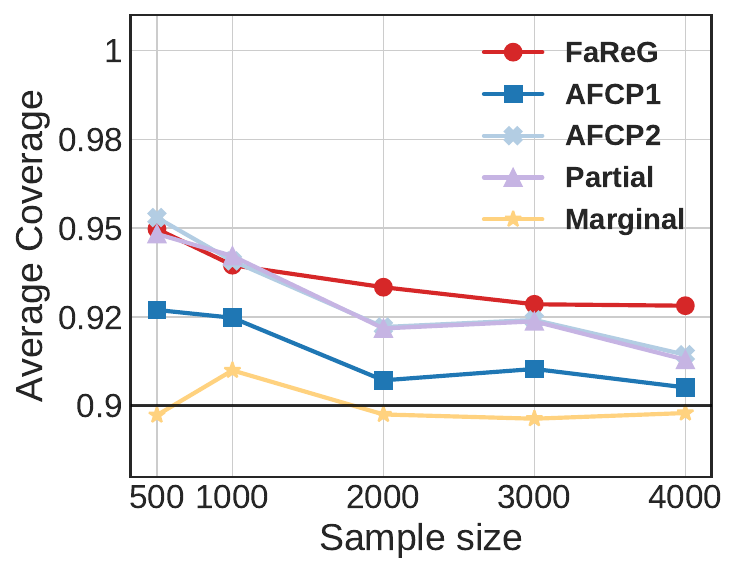}
            \caption{Average Coverage}\label{F:nursery_exp1_3}
        \end{subfigure}
        \hfill
        \begin{subfigure}[t]{0.24\textwidth}
            \centering
            \includegraphics[width=\linewidth]{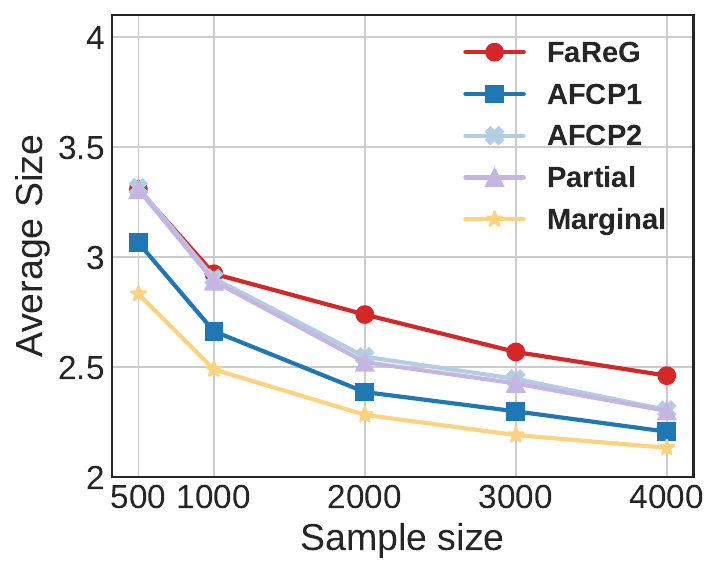}
            \caption{Average Size}\label{F:nursery_exp1_4}
        \end{subfigure}
        \vskip -0.5em
        \caption{Performance of prediction sets produced by different CP methods on the Nursery data w.r.t. the total number of training and calibration data instances. 
        Only our \name achieves the ideal conditional coverage (0.9) and keeps most of the uncertainty information of sets.
        }
        \label{F:nursery_exp2}
    \end{minipage}
\end{figure}

We evaluate our \name and baseline methods on the publicly available Nursery data~\citep{nursery_76}, originally constructed from a hierarchical decision model developed to rank applications for nursery schools. The dataset comprises 12,960 instances, each described by eight categorical features: Parents’ occupation (3 levels, $\textit{Parent:=\{usual, pretentious, great-pret\}}$), Child’s nursery (5 levels), Family form (4 levels), Number of children (4 levels), Housing conditions (3 levels), Financial standing (2 levels, $\textit{Finance:=\{convenient, inconv\}}$), Social conditions (3 levels) and Health status (3 levels). The task is to classify applications into one of five priority ranks. We take all features into account (as sensitive features) except Housing conditions.

In data preprocessing, we strictly follow~\citet{zhou2024conformal}, and consider a group defined by $\textit{Parent=usual } \& \textit{ Finance=inconv}$ or $\textit{Parent=pretentious } \& \textit{ Finance=inconv}$. 
To make the issue more interesting and control the degree of algorithmic bias, we corrupt the labels of instances in such a group by adding independent, uniform noise and rounding to the nearest integer (label) as similar as \cite{zhou2024conformal}. 
This perturbation amplifies the intrinsic unpredictability of the group defined before, thereby increasing its vulnerability to algorithmic bias.

Fig.~\ref{F:nursery_exp2} presents the results. Our method consistently achieves the valid coverage under both conditional coverage metrics, i.e., Group Coverage and $\text{WSC}^+_n$, outperforming all baselines. Partial and AFCP2 perform better than the other CP methods, but \name still achieves superior results.

\section{Related Work} \label{sec:rel}

Conformal Prediction (CP) has seen vigorous development in recent years~\citep{vovk2005algorithmic,smith2024uncertainty}.
Its applications span diverse domains, from image classification~\citep{sadinle2019least} and object detection~\citep{teng2022predictive} to large language models~\citep{kumar2023conformal}.

Some CP work, building on the split conformal framework~\citep{papadopoulos2002inductive, lei2018distribution}, introduces advanced nonconformity scores to ensure valid marginal coverage on the empirical data distribution. For example, \citet{romano2019conformalized} gives a nonconformity score based on quantile regression, while \citet{romano2020classification} and \citet{angelopoulos2020uncertainty} design nonconformity scores for classification.
Additionally, \citet{hoff2023bayes} proposes a nonconformity score to achieve Bayes optimal coverage.

Another line of work has explored various notions of equalized coverage~\citep{romano2020malice} and empirically evaluated the corresponding conformal predictors in real-world applications~\citep{lu2022fair}.
For regression tasks, \citet{wang2023equal} guarantees equal coverage rates across more fine-grained groups on continuous features, and 
\citet{liu2022conformalized} propose to learn a real-valued quantile function with respect to sensitive features.
They address a distinct notion of equalized coverage tailored to continuous outcomes.
In classification, label-conditional coverage is a common alternative to equalized coverage~\citep{vovk2003mondrian,lofstrom2015bias,ding2023class}. 
This work defines the groups to be protected based on the label $Y_{N+1}$, instead of the features $X_{N+1}$.
\citet{jung2022batch}, \citet{vadlamani2025generic},    and 
\citet{gibbs2025conformal} adopt group-conditional coverage, which is analogous to equalized coverage, to improve prediction sets.
Different from the previous work, our approach \name can adaptively identify unfairly treated groups without the assumption that such groups are pre-defined.
AFCP~\citep{zhou2024conformal} develops an algorithm to construct CP sets with valid equalized coverage for adaptively selected groups, which establishes the current state-of-the-art for equalized coverage tasks.

\hide{
\noindent\textbf{Conformal Prediction for Guaranteed Uncertainty}
Uncertainty Quantification (UQ) is crucial for developing trustworthy machine learning systems; however, many popular methods, such as Bayesian deep learning~\cite{gal2016dropout} and model ensembles, lack rigorous statistical guarantees. Conformal Prediction (CP) distinguishes itself by providing distribution-free, finite-sample marginal coverage guarantees, regardless of the underlying model's complexity or the data distribution~\cite{vovk2005algorithmic}. This robust theoretical foundation has led to its wide adoption across diverse applications, including computer vision~\cite{sadinle2019least, teng2022predictive} and large language models~\cite{kumar2023conformal}, making it a cornerstone of modern UQ.

\noindent\textbf{The Trade-off Between Validity and Efficiency}
A central challenge in CP is striking a balance between its theoretical validity and computational and statistical efficiency. To overcome this, split conformal prediction~\cite{lei2015conformal} has become the standard practice. It achieves computational tractability by partitioning data into a single training and calibration set, but at the cost of statistical efficiency, as a portion of the data is withheld from training. Subsequent methods like cross-validation-based CP~\cite{vovk2015cross} and Jackknife variants~\cite{barber2021predictive} aim to improve data efficiency by allowing all samples to be used for both training and calibration, albeit typically providing approximate rather than exact finite-sample guarantees.

\noindent\textbf{Advancements in Set Efficiency and Adaptability}
Recent research has focused on enhancing the practicality of CP by improving the efficiency of the prediction set. Methods like Adaptive Prediction Sets (APS)~\cite{romano2020classification} and its successors, RAPS~\cite{angelopoulos2020uncertainty} and SAPS~\cite{huang2024conformal}, refine the non-conformity score to produce tighter and more informative sets with minimal overhead. Concurrently, significant effort is being directed toward strengthening theoretical guarantees from marginal to (approximate) conditional coverage~\cite{gibbs2024conformalpredictionconditionalguarantees, hore2024conformalpredictionlocalweights, zhang2024posteriorconformalprediction}.

\noindent\textbf{Fairness and Group-Conditional Coverage}
Standard marginal coverage may mask subgroup disparities, motivating recent work on stronger group-wise guarantees, such as equalized coverage~\cite{romano2020malice, lu2022fair} and label-conditional coverage~\cite{lofstrom2015bias, ding2023class}. While effective, a critical assumption in many of these methods is that the sensitive groups are pre-defined~\cite{gibbs2025conformal, jung2022batch}. This leaves open the more challenging and practical setting where such groups are unknown and must be discovered automatically to ensure fairness. Although related methods exist for identifying group-wise disparities~\cite{cherian2024statistical} or for prediction on adaptively selected subsets~\cite{jin2025confidence}, integrating group discovery explicitly within a conformal framework to correct for algorithmic bias remains underdeveloped.
}


\section{Conclusion} \label{sec:con}

 In this paper, we propose \name, a fair conformal prediction method that learns latent groups to achieve adaptive equalized coverage. By leveraging a variational encoder-decoder to discover subgroups wih poor coverage in a high-level feature space, our approach captures complex algorithmic biases that linear methods may neglect and can be adapted to different fairness notions. 
 We also propose $\text{WSC}^+$, a nonlinear metric for evaluating the conditional coverage of unfair groups more accurately. 
 Extensive experiments confirm that \name efficiently offers stronger fairness guarantees, showing a more expressive and practical path toward fair, reliable conformal inference.

\noindent\textbf{Limitations.} 
The enhanced expressivity of representation-based groups may 
sacrifice model interpretability partially, compared to groups explicitly defined on manifest features. 
However, the encoder-decoder structure compensates this shortcoming well via reconstructing the input $X$, which is empirically confirmed by Section~\ref{sec:synthetic data} and Appendix~\ref{A:vision}.



\section*{Acknowledgment}
We appreciate anonymous reviewers for their valuable comments.  
This work is supported by the Frontier Technologies R\&D Program of Jiangsu (BF2024059), National Natural Science Foundation of China (Grants \#62025202), and the Collaborative Innovation Center of Novel Software Technology and Industrialization.
T. Chen is partially supported by overseas grants from the State Key Laboratory of Novel Software Technology, Nanjing University (KFKT2023A04,  KFKT2025A05).
Yuan Yao is the corresponding author.









\bibliography{refs}
\bibliographystyle{iclr2026_conference}

\appendix
\newpage


\section{Technical proofs}

\subsection{Variational Inference} \label{appendix:var}

As mentioned in Section~\ref{sec:learning groups}, our optimization objective is as follows, 
\begin{equation*}
\max I(Z,S) + I(Z,X) - \beta I(Z,i).
\end{equation*}

First of all, we consider $I(Z,S)$ and 
\begin{equation}\label{Eq:I(Z,S)-1}
I(Z,S) = \int p_\theta(s,z)\log \frac{p_\theta(s, z)}{p_\theta(s)p_\theta(z)} \d s \d z = \int p_\theta(s,z)\log \frac{p_\theta(s \given  z)}{p_\theta(s)} \d s \d z.
\end{equation}

Since the KL divergence between two conditional probability distribution $p_\theta(s \given  z)$ and $q_\phi(s \given  z)$ is non-negative, we have
\begin{equation*}
D_{\mathrm{KL}}(p_\theta(s \given  z) \Vert q_\phi(s \given  z)) \geq 0 \Rightarrow \int p_\theta(s,z)\log p_\theta(s \given  z) \d s \geq \int p_\theta(s,z)\log q_\phi(s\given  z) \d s,
\end{equation*}
where $q_\phi(s\given  z)$ is a variational approximation to the intractable distribution $p_\theta(s \given  z)$.

Plugging the above inequality into Eq.~\ref{Eq:I(Z,S)-1}, we obtain
\begin{equation}\label{Eq:I(Z,S)-2}
\begin{aligned}
I(Z,S) & \geq \int p_\theta(s,z)\log \frac{q_\phi(s \given  z)}{p_\theta(s)} \d s \d z\\
&= \int p_\theta(s,z) \log q_\phi(s \given  z) \d s \d z + \int p_\theta(s) \log p_\theta(s) \d s \\
& \ge \int p_\theta(s,z) \log q_\phi(s \given  z) \d s \d z,
\end{aligned}
\end{equation}
where the second inequality is derived by the non-negativity of entropy.

Since $S\upmodels Z\given X$ holds, we have
\begin{equation*}
p_\theta(s,z)=\int p_\theta(x,s,z) \d x=\int p_\theta(x)p_\theta(s\given  x)p_\theta(z\given  x) \d x.
\end{equation*}
Hence, we get
\begin{equation}\label{Eq:I(Z,S)-3}
I(Z,S) \geq \int p_\theta(x)p_\theta(s\given  x)p_\theta(z\given  x)\log q_\phi(s\given  z) \d x\d s\d z.
\end{equation}

Similar to Eq.~\ref{Eq:I(Z,S)-2}, we also have
\begin{equation}\label{Eq:I(Z,X)}
\begin{aligned}
I(Z,X) &\geq \int p_\theta(x,z)\log q_\varphi(x\given  z) \d x\d z \\ 
&= \int p_\theta(x)p_\theta(z\given  x)\log q_\varphi(x\given  z) \d x\d z.\\
\end{aligned}
\end{equation}

As for $I(Z,i)$, we have
\begin{equation}\label{Eq:I(Z,i)}
\begin{aligned}
I(Z,i) &= \sum_i \int p_\theta(z \given  i)p_\theta(i)\log \frac{p_\theta(z\given  i)}{p_\theta(z)}\d z \\ 
&= \frac{1}{N}\sum_i\int p_\theta(z\given  x_i)\log \frac{p_\theta(z\given  x_i)}{p_\theta(z)}\d z\\
&\leq \frac{1}{N}\sum_i\int p_\theta(z\given  x_i)\log \frac{p_\theta(z\given  x_i)}{r(z)}\d z,
\end{aligned}
\end{equation}
where $r(z)$ is a variational approximation to the posterior distribution $p_\theta(z)$. We usually set $r(z)$ as a standard normal distribution $\mathcal{N}(0,1)$ in practice.

Combining Eq.~\ref{Eq:I(Z,S)-3} with Eq.~\ref{Eq:I(Z,X)} and Eq.~\ref{Eq:I(Z,i)}, we obtain

\begin{equation*}
\begin{aligned}
& I(Z,S) + I(Z,X) - \beta I(Z,i) \geq \int p_\theta(x)p_\theta(s\given  x)p_\theta(z\given  x)\log q_\phi(s\given  z) ~\d x\d s\d z \\
&\qquad + \int p_\theta(x)p_\theta(z\given  x)\log q_\varphi(x\given  z) ~\d x\d z - \frac{\beta}{N}\sum_i\int p_\theta(z\given  x_i)\log \frac{p_\theta(z\given  x_i)}{r(z)}\d z.
\end{aligned}
\end{equation*}

With Monte Carlo sampling, we use the empirical dataset on $\{X_i, S_i, Y_i\}_{i=1}^N$ to estimate $p_\theta(x)p_\theta(s\given x)$ and $p_\theta(x)$, 
where $S_i$ is computed by minimizing the conditional coverage of groups defined by $\mathbf{S}=\{s_1,\dots,s_N\}$, i.e., $\mathbb{P}_n[Y_i \in C(X_i) \given X_i \in \hat{\mathcal{G}}_\mathbf{S}]$ on $\{X_i, S_i, Y_i\}_{i=1}^N$.
We leverage the reparameterization trick~\citep{kingma2013auto} as mentioned in Section~\ref{sec:learning groups}, and finally obtain
\begin{equation*}
\mathcal{L} = -\frac{1}{N}\sum_{i=1}^N \left( \mathbb{E}_{\tilde{z}\sim f(x_i,\epsilon)}[\log q_\phi(s_i\given  \tilde{z}) + \log q_\varphi(x_i\given  \tilde{z})] - \beta D_{\mathrm{KL}}(p_\theta(z\given  x_i)\| r(z))\right).
\end{equation*}


\hide{
\begin{equation*}
\begin{aligned}
& \sum_{i=1}^N \log P_\theta(x_i) - \frac{\sum_{i=1}^N c_iP_\theta(p_i\given x_i)}{\sum_{i=1}^N P_\theta(p_i\given x_i)}\\ 
& \quad = \sum_{i=1}^N  [KL(Q_\phi(z\given x_i)\Vert P_\theta(z\given x_i)) + \mathcal{L}(\theta,\phi;x_i) ] - \frac{\sum_{i=1}^N c_i\int P_\theta(p_i\given z)P_\theta(z\given x_i)~dz}{\sum_{i=1}^N \int P_\theta(p_i\given z)P_\theta(z\given x_i)~dz} \\
\end{aligned}
\end{equation*}

\begin{equation*}
\begin{aligned}
\sum_{i=1}^N \log P_\theta(x_i) &= \sum_{i=1}^N \log \int P_\theta(x_i\given z)P_\theta(z) ~dz \\
&= \sum_{i=1}^N \log \int Q_\phi(z\given x_i)\frac{P_\theta(x_i\given z)P_\theta(z)}{Q_\phi(z\given x_i)} ~dz \\
&\ge \sum_{i=1}^N \int Q_\phi(z\given x_i)\log \frac{P_\theta(x_i\given z)P_\theta(z)}{Q_\phi(z\given x_i)} ~dz \\
& = \sum_{i=1}^N \mathbb{E}_{Q_\phi(z\given x_i)}[\log P_\theta(x_i\given z)] - KL(Q_\phi(z\given x_i)\Vert P(z)).
\end{aligned}
\end{equation*}

\begin{equation*}
\begin{aligned}
\frac{\sum_{i=1}^N c_iP_\theta(p_i\given x_i)}{\sum_{i=1}^N P_\theta(p_i\given x_i)} &= \frac{\sum_{i=1}^N c_i\int P_\theta(p_i\given z,x_i)P_\theta(z\given x_i)~dz}{\sum_{i=1}^N \int P_\theta(p_i\given z,x_i)P_\theta(z\given x_i)~dz}\\
&= \frac{\sum_{i=1}^N c_i\int P_\theta(p_i\given z)P_\theta(z\given x_i)~dz}{\sum_{i=1}^N \int P_\theta(p_i\given z)P_\theta(z\given x_i)~dz}\\
&= \frac{\sum_{i=1}^N c_i \mathbb{E}_{P_\theta(z\given x_i)}[P_\theta(p_i\given z)]}{\sum_{i=1}^N \mathbb{E}_{P_\theta(z\given x_i)}[P_\theta(p_i\given z)]}
\end{aligned}
\end{equation*}
}


\subsection{Proof of Proposition~\ref{prop:finite}} \label{A:prop1}


\begin{proof}
We first present a technical lemma, 
where $P_nh=\frac{1}{N}\sum_{i=1}^Nh(X_i)$ and $Ph=\int h(x)dP(x)$, given an observed dataset $\{X_i,Y_i\}_{i=1}^N$.
\begin{lemma}[\cite{boucheron2005theory}]
There exists a numerical constant $C_1$ such that for any $\tau > 0$, 
\begin{equation*}
\vert P_nh -Ph \vert \leq C_1\left[\sqrt{\min\{P_nh,Ph\}\frac{VC(h)\log N+\tau}{N}} + \frac{VC(h)\log N+\tau}{N}\right]
\end{equation*} 
holds with probability at least $1-e^{-\tau}$.
\end{lemma}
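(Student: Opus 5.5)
The plan is to treat the lemma as a uniform relative-deviation (ratio-type) inequality over the indicator class $\mathcal{H}$, and to derive it from Vapnik--Chervonenkis symmetrization combined with Sauer's lemma. The bound is then read off for the particular $h$ in question, which is how it is used in Proposition~\ref{prop:finite}.

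\textbf{Step 1: normalized deviation bounds.} For $h$ taking values in $\{0,1\}$, I would first establish the two normalized inequalities
\begin{equation*}
\mathbb{P}\Big(\sup_{h\in\mathcal{H}} \frac{Ph-P_nh}{\sqrt{Ph}}>\varepsilon\Big)\le 4\,S_{\mathcal{H}}(2N)\,e^{-N\varepsilon^2/4},
\qquad
\mathbb{P}\Big(\sup_{h\in\mathcal{H}} \frac{P_nh-Ph}{\sqrt{P_nh}}>\varepsilon\Big)\le 4\,S_{\mathcal{H}}(2N)\,e^{-N\varepsilon^2/4}.
\end{equation*}
Here $S_{\mathcal{H}}$ denotes the growth function. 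The route is the standard one:
\begin{itemize}
\item Introduce a ghost sample of size $N$.
\item Symmetrize, using a Chebyshev-type argument to show that $P'_nh$ concentrates near $Ph$ when $Ph$ is not tiny.
\item Condition on the $2N$ points, so that the supremum becomes a maximum over at most $S_{\mathcal{H}}(2N)$ distinct labelings.
\item For each fixed labeling, bound the probability of the deviation event under random swaps by a hypergeometric/Hoeffding tail. The normalization by $\sqrt{Ph}$ (or by the pooled mean $\tfrac12(P_nh+P'_nh)$) is what turns this into a Bernstein-like exponent $N\varepsilon^2/4$ rather than a variance-free one.
\end{itemize}
By Sauer's lemma, $S_{\mathcal{H}}(2N)\le (2eN/R)^R$ with $R=VC(\mathcal{H})$. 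Hence $\log(4S_{\mathcal{H}}(2N))\lesssim R\log N$.

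\textbf{Step 2: fix the deviation level.} Setting the right-hand side to $e^{-\tau}/2$ for each inequality gives $\varepsilon^2= C\,(R\log N+\tau)/N$ with a numerical constant $C$. On the intersection of the two events, which has probability at least $1-e^{-\tau}$, we have
\[
Ph-P_nh\le \varepsilon\sqrt{Ph}\quad\text{and}\quad P_nh-Ph\le\varepsilon\sqrt{P_nh}\qquad\text{for all } h\in\mathcal{H}.
\]

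\textbf{Step 3: convert to the $\min$ form (expected main obstacle).} The delicate step is obtaining $\sqrt{\min\{P_nh,Ph\}}$ in place of the "wrong" normalizer, for each sign of the deviation. Suppose $Ph\ge P_nh$. The second inequality already gives the bound in terms of $\sqrt{P_nh}=\sqrt{\min}$ when $P_nh\ge Ph$, so the problem case is the first inequality, which is normalized by $\sqrt{Ph}$. I would treat $u=\sqrt{Ph}$ as the unknown in the quadratic inequality $u^2-\varepsilon u-P_nh\le 0$. This yields
\[
u\le \tfrac{\varepsilon}{2}+\sqrt{\tfrac{\varepsilon^2}{4}+P_nh}\le \varepsilon+\sqrt{P_nh},
\]
and substituting back gives $Ph-P_nh\le \varepsilon\sqrt{P_nh}+\varepsilon^2$. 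The case $P_nh\ge Ph$ is handled symmetrically. Collecting the two cases, and absorbing the constant in $\varepsilon^2$ into $C_1$, gives
\[
|P_nh-Ph|\le C_1\Big[\sqrt{\min\{P_nh,Ph\}\,\tfrac{R\log N+\tau}{N}}+\tfrac{R\log N+\tau}{N}\Big],
\]
which is the claim with $VC(h)$ read as $VC(\mathcal{H})$.

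The symmetrization constants in Step 1, in particular handling small $Ph$ where Chebyshev fails, are routine but fiddly. For those I would defer to the classical Vapnik or Anthony--Shawe-Taylor argument.
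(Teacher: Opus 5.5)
Your proof is correct, and it follows the standard argument behind the citation: the paper does not prove this lemma but imports it from Boucheron--Bousquet--Lugosi, whose relative-deviation theorem (the two bounds $4S_{\mathcal{H}}(2N)e^{-N\varepsilon^2/4}$) is your Step 1 and whose corollary is your quadratic inversion in Step 3. Reading $VC(h)$ as $VC(\mathcal{H})$ and proving a bound uniform in $h$ is the right interpretation, since Proposition~\ref{prop:finite} applies the lemma to a data-dependent group.

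One small slip: in Step 3, the clause ``the second inequality already gives the bound \ldots\ when $P_nh\ge Ph$'' is misstated. In each case, the inequality that controls the relevant sign of the deviation is normalized by the \emph{larger} of $P_nh$ and $Ph$, so the inversion is needed in both cases. Your ``handled symmetrically'' does in fact do this.
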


By this Lemma, we have
\begin{align}
&\lvert P_n(Y \in C(X), X \in \hat{\mathcal{G}}_\mathbf{s}) 
   - P(Y \in C(X), X \in \hat{\mathcal{G}}_\mathbf{s}) \rvert  \label{Eq:p1-1} \\
& \qquad \leq C_1\left[\sqrt{\min\{P_n(Y \in C(X), X \in \hat{\mathcal{G}}_\mathbf{s}),
   P(Y \in C(X), X \in \hat{\mathcal{G}}_\mathbf{s})\}\tfrac{VC(h)\log N+\tau}{N}}
   + \tfrac{VC(h)\log N+\tau}{N}\right]. \notag
\end{align}
Similarly, we get
\begin{align}
& \lvert P_n(X \in \hat{\mathcal{G}}_\mathbf{s}) - P(X \in \hat{\mathcal{G}}_\mathbf{s}) \rvert \label{Eq:p1-2} \\
& \qquad \leq C_2\left[\sqrt{\min\{P_n(X \in \hat{\mathcal{G}}_\mathbf{s}), P(X \in \hat{\mathcal{G}}_\mathbf{s})\}\tfrac{VC(h)\log N+\tau}{N}} + \tfrac{VC(h)\log N+\tau}{N}\right] \notag.
\end{align}
Then, it remains to show that 
\begin{equation*}
\begin{aligned}
&\lvert P_n(Y \in C(X) \given X \in \hat{\mathcal{G}}_\mathbf{s}) - P(Y \in C(X) \given X \in \hat{\mathcal{G}}_\mathbf{s}) \rvert \\
&\qquad = \left\lvert \frac{P_n(Y \in C(X), X \in \hat{\mathcal{G}}_\mathbf{s})}{P_n(X \in \hat{\mathcal{G}}_\mathbf{s})} - \frac{P(Y \in C(X), X \in \hat{\mathcal{G}}_\mathbf{s})}{P(X \in \hat{\mathcal{G}}_\mathbf{s})} \right\rvert.
\end{aligned}
\end{equation*}
Let $a=P_n(Y \in C(X), X \in \hat{\mathcal{G}}_\mathbf{s}), b=P(Y \in C(X), X \in \hat{\mathcal{G}}_\mathbf{s}),c=(P_n-P)(X \in \hat{\mathcal{G}}_\mathbf{s})$ and $d=P(X \in \hat{\mathcal{G}}_\mathbf{s})$. 
We can derive $b\leq d$, and observe that 
\begin{equation}\label{Eq:p1-3}
\lvert \frac{a}{c+d} - \frac{b}{d} \rvert \leq  \lvert \frac{a}{c+d} - \frac{b-c }{c+d} \rvert \leq \frac{\lvert a-b \rvert}{c+d} + \frac{\lvert c \rvert}{c+d}.
\end{equation}
Substitute Eq.~\ref{Eq:p1-1} and Eq.~\ref{Eq:p1-2} into Eq.~\ref{Eq:p1-3}, and use $\delta=P_n(X \in \hat{\mathcal{G}}_\mathbf{s})$, we obtain
\begin{equation*}
\begin{aligned}
&\left\lvert \frac{P_n(Y \in C(X), X \in \hat{\mathcal{G}}_\mathbf{s})}{P_n(X \in \hat{\mathcal{G}}_\mathbf{s})} - \frac{P(Y \in C(X), X \in \hat{\mathcal{G}}_\mathbf{s})}{P(X \in \hat{\mathcal{G}}_\mathbf{s})} \right\rvert \\
&\quad \leq \frac{\lvert P_n(Y \in C(X), X \in \hat{\mathcal{G}}_\mathbf{s}) - P(Y \in C(X), X \in \hat{\mathcal{G}}_\mathbf{s}) \rvert}{P_n(X \in \hat{\mathcal{G}}_\mathbf{s})} - \frac{\lvert P_n(X \in \hat{\mathcal{G}}_\mathbf{s}) - P(X \in \hat{\mathcal{G}}_\mathbf{s}) \rvert}{P_n(X \in \hat{\mathcal{G}}_\mathbf{s})}\\
&\quad \leq C_3\left[\sqrt{\frac{VC(h)\log N+\tau}{\delta N}} + \frac{VC(h)\log N+\tau}{\delta N}\right],
\end{aligned}
\end{equation*}
which completes the proof.
\end{proof}


\subsection{Optimization Process of Eq.~\ref{Eq:project}} \label{A:opt}
The projection operation of the PGD algorithm described in Section~\ref{sec:learning groups} requires solving the following optimization to minimize the $\ell_2$ distance:
\begin{equation}\label{Eq:project problem}
\min_{v_1,\dots,v_n} \sum_{i=1}^N (v_i - u_i)^2 \quad \text{s.t.} ~\sum_{i=1}^N v_i \geq \delta, \quad v_i \in [0,1] ~~i=1,\dots,N,
\end{equation}
where $u_1,\dots,u_N$ are given and $u_i\in[0, 1]$ holds for each $i\in [N]$.

With the above constraints, we compute the Lagrangian as
\begin{equation*}\label{Eq:L2}
\mathcal{L}(v_i; \lambda_i, \mu_i, \omega) = \sum_{i=1}^N(v_i-u_i)^2 + \sum_{i=1}^N\lambda_i(-v_i) + \sum_{i=1}^N\mu_i(v_i-1) + \omega(\delta-\sum_{i=1}^Nv_i),
\end{equation*}
where $\{\lambda_i\}_{i=1}^N,\{\mu_i\}_{i=1}^N$ and $\omega$ are the Lagrange multipliers. 
Let the partial derivatives vanish, and we have
\begin{equation*}
\frac{\partial \mathcal{L}}{\partial v_i} = 2(v_i - u_i) - \lambda_i + \mu_i - \omega = 0 \Rightarrow 2(v_i - u_i) = \lambda_i - \mu_i + \omega
\end{equation*}
For the complementary relaxation conditions, there are four different cases:
\begin{itemize}
    \item If $v_i=0$, constraint $v_i\geq 0$ is activated and we have $\lambda_i\geq 0,\mu_i=0$;
    \item If $v_i=1$, constraint $v_i\leq 1$ is activated and we have $\mu_i\geq 0,\lambda_i=0$;
    \item If $0<v_i<1$, we have $\mu_i =\lambda_i=0$ and then $v_i=u_i+\omega/2$;
    \item If $\sum v_i> \delta$, constraint $\sum v_i\geq \delta$ is not activated and then $\omega=0$; otherwise, $\omega\geq 0$.
\end{itemize}

When $\sum u_i \geq \delta$, we have $v_i=u_i$, which is an optimal solution to the minimization problem in Eq.~\ref{Eq:project problem}. 

When $\sum u_i < \delta$, let $v_i = \min(1,u_i+\omega/2)$, where $\omega\geq 0$ and $\sum_{i=1}^N\min(1,u_i+\omega/2) \geq \delta$. In this case, we resort $\{v_i\}_{i=1}^N$ in descending order, i.e., $v_{(1)}\geq v_{(2)}\geq \dots\geq v_{(N)}$. 
Let $k\in [N]$ is the greatest index to satisfy $v_{(k)}+\omega/2\geq 1$ and $v_{(k+1)}+\omega/2< 1$. Then, constraint $\sum v_i= \delta$ can be written as
\begin{equation*}
k\cdot1+\sum_{i=k+1}^N(v_{(i)}+\omega/2)=\delta.
\end{equation*}
Hence, we obtain
\begin{equation*}
\omega=\frac{2(\delta-k-\sum_{i=k+1}^Nv_{(i)})}{N-k}.
\end{equation*}

In practice, we can compute $k$ and $\omega$ via traversing the value of $k$ from maximum $N$ to minimum $1$.


\subsection{Proof of Theorem~\ref{thm:equalized coverage}}\label{A:thm}

\begin{proof}
When making the similar assumption as Theorem 1 in AFCP~\citep{zhou2024conformal}, for each group $\hat{\mathcal{G}}_{\mathbf{s}} \in \{\hat{\mathcal{G}}_{\mathbf{s}_t}\}_{t=1}^T$, we can substitute $X_{N+1}\in\hat{\mathcal{G}}_{\mathbf{s}}$ for $\phi(X_{N+1},\hat{A}(X_{N+1}))$ and $X_{N+1}\in\hat{\mathcal{G}}^o_{\mathbf{s}}$ for $\phi(X_{N+1},\hat{A}^o(X_{N+1}))$ as conditions, where $\hat{\mathcal{G}}^o_{\mathbf{s}}$ is an imaginary oracle group.
Then, according to Theorem 1~\citep{zhou2024conformal}, we have 
\begin{equation*}
\mathbb{P}[Y_{N+1} \in C(X_{N+1}) \given X_{N+1}\in \hat{\mathcal{G}}_{\mathbf{s}}] \geq 1-\alpha.
\end{equation*}

AFCP assumes that the group selection algorithm can always achieve the oracle group $\hat{\mathcal{G}}^o_{\mathbf{s}}$, which means that the algorithm must have enough expressiveness to include $\hat{\mathcal{G}}^o_{\mathbf{s}}$ into the candidate group space.
However, this necessary condition could be violated, as AFCP's candidate group space is limited to linear groups defined by individual features.
In contrast, our method, \name, employs a more expressive model that extends its candidate group space into the nonlinear realm. Consequently, the guarantee for \name remains valid for groups defined by complex, nonlinear feature combinations.

Next, we formally analyze the expressiveness of AFCP and our \name based on the VC-dimension.
As described in Section~\ref{sec:intro}, AFCP computes the group coverage scores for each feature and greedily picks the most sensitive feature with the lowest group coverage score.
The essence of such a process is a decision stump dividing all features into two parts (sensitive or not sensitive) using a threshold, and thus its VC-dimension is 2.
In contrast, based on established theory~\citep{shalev2014understanding}, the VC-dimension of \name scales with its parameter size $M$, i.e.,
\begin{equation*}
\text{VC}(\text{AFCP})=2, \quad \text{VC}(\text{\name})=\mathcal{O}(M).
\end{equation*}
Hence, the VC-dimension of our \name is typically far larger than that of AFCP, indicating the stronger expressiveness of our method, i.e., our candidate group space serves as a superset of AFCP's candidate group space.

\hide{
We compare AFCP and our \name based on the VC-dimension.
As described in Section~\ref{sec:intro}, AFCP computes the group coverage scores for each feature and greedily picks the most sensitive feature with the lowest group coverage score.
The essence of such a process is a decision stump dividing all features into two parts (sensitive or not sensitive) using a threshold, and thus its VC-dimension is 2.
Hence, the VC-dimension of our \name is typically far larger than that of AFCP, indicating the stronger expressiveness of our method, i.e., our candidate group space serves as a superset of AFCP's candidate group space.
Therefore, given sufficient data points, the optimal solution of our method is superior to that of AFCP.
In other words, the conditional coverage of the selected group set $\{\hat{\mathcal{G}}_{\mathbf{s}_t}\}_{t=1}^T$ by our method is always no higher than that of AFCP.
}
\end{proof}


\hide{
The core idea of our proof is to connect the output prediction set $C(X_{N+1})$ and selected group set $\{\hat{\mathcal{G}}_{\mathbf{s}_t}\}_{t=1}^T$ from Algorithm~\ref{alg:framework} to those of the imaginary oracle described above. 
Throughout this proof, we denote the universal set of the feature space $\mathcal{X}$ as $\mathcal{G}^*$, i.e., $x\in \mathcal{G}^*, \forall x\in \mathcal{X}$.

Take arbitrary $\hat{\mathcal{G}}_{\mathbf{s}}\in \{\hat{\mathcal{G}}_{\mathbf{s}_t}\}_{t=1}^T$ for an instance. To establish this connection, note that the group $\hat{\mathcal{G}}_{\mathbf{s}}$ selected by Algorithm~\ref{alg:framework} is either an universal set, $\mathcal{G}^*$, or a subgroup $\hat{\mathcal{G}}_{\mathbf{s}} \subseteq \mathcal{G}^*$. In the latter, we assume $\hat{\mathcal{G}}_{\mathbf{s}}=\mathcal{G}^o$ as similar as \citep{zhou2024conformal}. Hence,



\begin{equation}\label{Eq:thm-1}
\begin{aligned}
&\mathbb{P}[Y_{N+1} \in C(X_{N+1}) \given X_{N+1}\in \hat{\mathcal{G}}_{\mathbf{s}}] \\
&\quad\geq \min  \{\mathbb{P}[Y_{N+1} \in C(X_{N+1}) \given X_{N+1}\in \mathcal{G}^*], \\ 
& ~~~~~~~~~~~~~~~~~\mathbb{P}[Y_{N+1} \in C(X_{n+1}) \given X_{N+1}\in \hat{\mathcal{G}}^o_{\mathbf{s}}] \}\\
&\quad= \min  \{\mathbb{P}[Y_{N+1} \in C(X_{N+1})], \\
&~~~~~~~~~~~~~~~~~\mathbb{P}[Y_{N+1} \in C(X_{N+1}) \given X_{N+1}\in \hat{\mathcal{G}}^o_{\mathbf{s}}] \}\\
&\quad\geq \min  \{\mathbb{P}[Y_{N+1} \in C_\frak{m}(X_{N+1}, \mathcal{D})],\\
&~~~~~~~~~~~~~~~~~\mathbb{P}[Y_{N+1} \in C_\frak{m}(X_{N+1}, \hat{\mathcal{G}}^o_{\mathbf{s}})) \given X_{N+1}\in \hat{\mathcal{G}}^o_{\mathbf{s}}] \},
\end{aligned}
\end{equation}
where the last inequality follows from the facts that $C_\frak{m}(X_{N+1}, \mathcal{D})\subseteq C(X_{N+1})$ and $C_\frak{m}(X_{N+1}, \hat{\mathcal{G}}^o_{\mathbf{s}}))\subseteq C(X_{N+1})$.

Obviously, we have $\mathbb{P}[Y_{N+1} \in C_\frak{m}(X_{N+1}, \mathcal{D})]=1-\alpha$ and $\mathbb{P}[Y_{N+1} \in C_\frak{m}(X_{N+1}, \hat{\mathcal{G}}^o_{\mathbf{s}})) \given X_{N+1}\in \hat{\mathcal{G}}^o_{\mathbf{s}}]=1-\alpha$.

The first part is the remaining task is trivial. It is already well-known that $\mathbb{P}(Y_{N+1} \in C_\frak{m}(X_{N+1}, \mathcal{D})=1-\alpha$~\citep{vovk2005algorithmic}.

The above poof is also confirmed by AFCP~\citep{zhou2024conformal}.

The essence of AFCP is 
}







\subsection{Proof of Proposition~\ref{prop:metric}}
\label{A:prop2}


\begin{proof}
According to Proposition~\ref{prop:finite} and the definition of $\text{WSC}^+_n$ (Eq.~\ref{Eq:wsc+}), we obtain
\begin{equation*}
\sup_{\pi\in \Pi} \{\lvert \text{WSC}^+_n(C, \pi) - \mathbb{P}(Y\in C(X)\given a\leq \pi(X) \leq b)\rvert\}
\leq \mathcal{O}(1)\sqrt{\frac{VC(\Pi)\log N}{\delta N}}
\end{equation*}
by omitting $\tau$.
Then, we eliminate the absolute value as
\begin{equation*}
-\mathcal{O}(1)\sqrt{\frac{VC(\pi)\log N}{\delta N}} \leq \text{WSC}^+_n(C, \pi) - \mathbb{P}(Y\in C(X)\given a\leq \pi(X) \leq b)
\leq \mathcal{O}(1)\sqrt{\frac{VC(\pi)\log N}{\delta N}},
\end{equation*}
which holds for all $\pi\in\Pi$.
Hence, if $\mathbb{P}(Y\in C(X)\given a\leq \pi(X) \leq b) = 1-\alpha$, we can observe
\begin{equation*}
\text{WSC}^+_n(C, \pi) \geq \mathbb{P}(Y\in C(X)\given a\leq \pi(X) \leq b) - \mathcal{O}(1)\sqrt{\frac{VC(\pi)\log N}{\delta N}}
\end{equation*}
for any $\pi\in\Pi$.

Next, we only need to prove $\mathcal{O}(d^2) \geq VC(\pi)$.
Recall that $VC(\pi)$ denotes the VC-dimension of the binary classifier $\pi$, and $\pi=\mathbf{x}^T\mathbf{W}\mathbf{x}+\mathbf{v}^T\mathbf{x}$ is a quadratic function, where $\mathbf{W}\in \mathbb{R}^{d\times d}$ and $\mathbf{v}\in \mathbb{R}^d$.
Therefore, the VC-dimension of $\pi$ is equal to the dimension of its expanded feature space $\mathcal{M}=d(d+1)/2 +d$, i.e., $\mathcal{O}(d^2)$,
which completes the proof.
\end{proof}


\section{Further experiment details}

\subsection{Dataset Construction and Hyperparameters}\label{A:hyperparameter}

 

\begin{table*}[h]
    \centering
    \caption{Hyperparameters of \name.   \label{T:hyperparameters}}
    \resizebox{0.7\linewidth}{!}{
    \begin{sc}
    \begin{threeparttable}
      \begin{tabular}{lcc}
       \toprule
        {Dataset} & Synthetic Data & Nursery Data   \\
        \midrule
        {Model} & MLP & MLP \\
        {Number of Layers}& 3 & 3 \\
        {Hidden dimension}& [64,32] & [64,32] \\
        {Epoch}& 2000 & 800 \\
        {Batch Size}& 500 & 500 \\
        {Learning Rate}& 0.001 & 0.01 \\
        {$\beta$}& 2.0 & 0.1 \\
        {$\delta$}& 0.3 & 0.1 \\
        {$T$}& 20 & 100 \\
      \bottomrule
      \end{tabular}
    \end{threeparttable}
    \end{sc}
 }
\end{table*}

For the dataset we use to evaluate two metrics in Section~\ref{sec:settings}, only $X[0],X[1],$ and $X[2]$ influence the label $Y$ and we define the conditional distribution $P(Y\given X)$ as
\begin{equation*}
P(Y \given X) = 
\begin{cases} 
\left( \frac{1}{3}, \frac{1}{3}, \frac{1}{3}, 0, 0, 0 \right), 
& \text{if } (X[0]\geq0.1)\oplus(X[1]\geq0.1) \text{ and } X[2] < 0.5, \\
\left( 0, 0, 0, \frac{1}{3}, \frac{1}{3}, \frac{1}{3} \right), 
& \text{if } (X[0]\geq0.1)\oplus(X[1]\geq0.1) \text{ and } X[2] \geq 0.5, \\
\left( 1, 0, 0, 0, 0, 0 \right), 
& \text{if not } (X[0]\geq0.1)\oplus(X[1]\geq0.1) \text{ and } X[2] < \frac{1}{6}, \\
\left( 0, 1, 0, 0, 0, 0 \right), 
& \text{if not } (X[0]\geq0.1)\oplus(X[1]\geq0.1) \text{ and } \frac{1}{6} \leq X[2] < \frac{2}{6}, \\
\vdots \\
\left( 0, 0, 0, 0, 0, 1 \right), 
& \text{if not } X[0] = (X[0]\geq0.1)\oplus(X[1]\geq0.1) \text{ and } \frac{5}{6} \leq X[2] \leq 1.
\end{cases}   
\end{equation*}

For the classification models as the input of conformal prediction, we strictly follow the settings in~\citep{zhou2024conformal} on both synthetic and real-world data.
To train \name to mine unfair groups, we randomly split the calibration set $\mathcal{D}$ into the training set and validation set with the ratio 5:5.
We list the hyperparameters of \name in Table~\ref{T:hyperparameters}.
Note that we use the same network structure for encoders and decoders, i.e., a simple 3-layer MLP, which is consistent with Proposition~\ref{prop:finite}.
Since there are three optimization objectives in Eq.~\ref{Eq:Loss Funciton_2}, which may conflict with each other to some extent, we divide the training into two stages. At the first stage, we train the encoder with the parameter $\theta$ and the decoder with the parameter $\phi$ by fixing the decoder with the parameter $\varphi$ in practice, i.e., the first term $\mathcal{L}_{CC}$ and third term $\mathcal{L}_{KL}$ in Eq.~\ref{Eq:Loss Funciton_2}.
Then, we use $\mathcal{L}_{MSE}$ to reconstruct $X$ based on $Z$ at the second stage.

Recall from Section~\ref{sec:synthetic data} that Color is denoted as $X[0]$, Gender is denoted as $X[1]$, and the first standard feature is denoted as $X[2]$. The conditional distribution of $Y\given X$ is determined by a simple decision tree, where only $X[0]$, $X[1]$, and $X[2]$ provide valuable predictive information for $Y$, formulated as follows,

\begin{equation*}
P(Y \given X) = 
\begin{cases} 
\left( \frac{1}{3}, \frac{1}{3}, \frac{1}{3}, 0, 0, 0 \right), 
& \text{if } X[0] = \textit{Red } \text{and } X[1] = \textit{Female } \text{and } X[2] < 0.5, \\
\left( 0, 0, 0, \frac{1}{3}, \frac{1}{3}, \frac{1}{3} \right), 
& \text{if } X[0] = \textit{Red } \text{and } X[1] = \textit{Female } \text{and } X[2] \geq 0.5, \\
\left( \frac{1}{3}, \frac{1}{3}, \frac{1}{3}, 0, 0, 0 \right), 
& \text{if } X[0] = \textit{Blue } \text{and } X[1] = \textit{Male } \text{and } X[2] < 0.5,\\
\left( 0, 0, 0, \frac{1}{3}, \frac{1}{3}, \frac{1}{3} \right), 
& \text{if } X[0] = \textit{Blue } \text{and } X[1] = \textit{Male } \text{and } X[2] \geq 0.5,\\
\left( 1, 0, 0, 0, 0, 0 \right), 
& \text{if } X[0] = \textit{Red } \text{and } X[1] = \textit{Male } \text{and } X[2] < \frac{1}{6}, \\
\left( 0, 1, 0, 0, 0, 0 \right), 
& \text{if } X[0] = \textit{Red } \text{and } X[1] = \textit{Male } \text{and } \frac{1}{6} \leq X[2] < \frac{2}{6}, \\
\vdots \\
\left( 0, 0, 0, 0, 0, 1 \right), 
& \text{if } X[0] = \textit{Red } \text{and } X[1] = \textit{Male } \text{and } \frac{5}{6} \leq X[2] \leq 1,\\
\left( 1, 0, 0, 0, 0, 0 \right), 
& \text{if } X[0] = \textit{Blue } \text{and } X[1] = \textit{Female } \text{and } X[2] < \frac{1}{6}, \\
\vdots \\
\left( 0, 0, 0, 0, 0, 1 \right), 
& \text{if } X[0] = \textit{Blue } \text{and } X[1] = \textit{Female } \text{and } \frac{5}{6} \leq X[2] \leq 1.
\end{cases}   
\end{equation*}


\subsection{Parameter sensitivity}\label{A:Parameter sensitivity}

\begin{figure}[t]
    \centering
    \begin{minipage}[t]{\textwidth}
        \centering
        \begin{subfigure}[t]{0.24\textwidth}
            \centering
            \includegraphics[width=\linewidth]{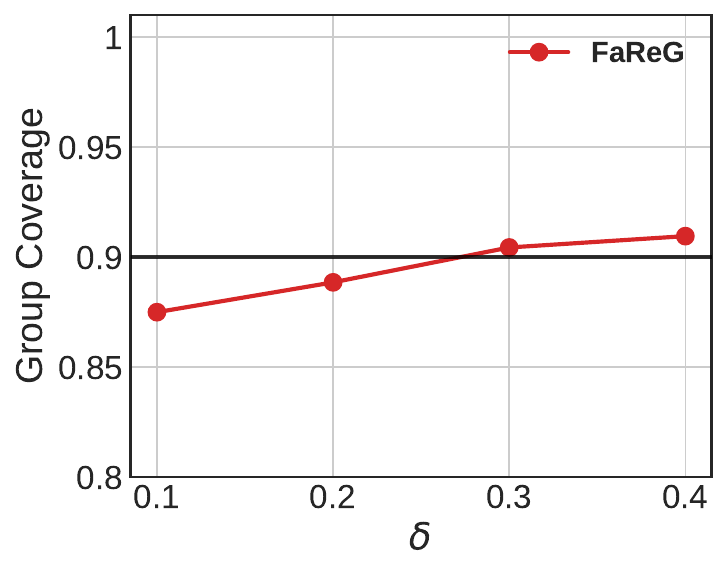}
            \caption{Group Coverage}\label{F:param-d-1}
        \end{subfigure}
        \hfill
        \begin{subfigure}[t]{0.24\textwidth}
            \centering
            \includegraphics[width=\linewidth]{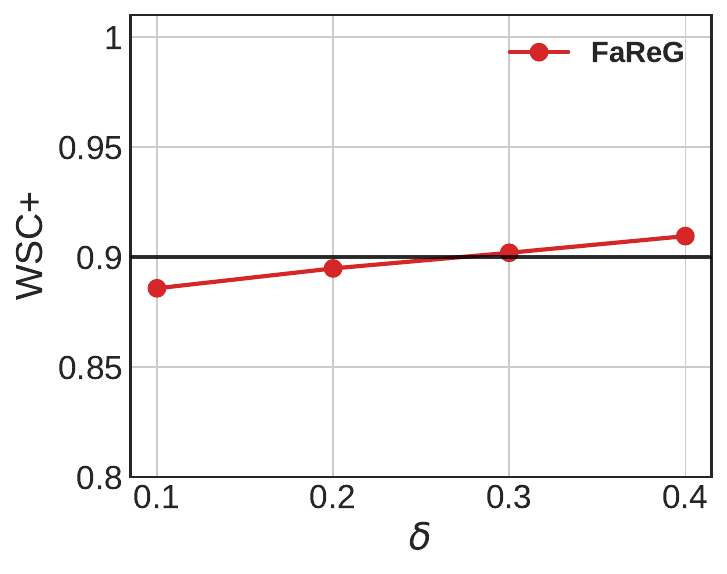}
            \caption{$\text{WSC}^+_n$}\label{F:param-d-2}
        \end{subfigure}
        \hfill
        \begin{subfigure}[t]{0.24\textwidth}
            \centering
            \includegraphics[width=\linewidth]{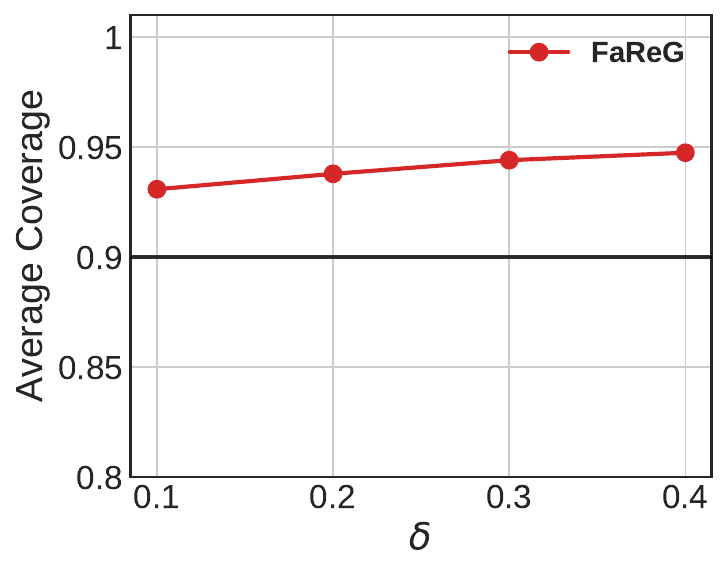}
            \caption{Average Coverage}\label{F:param-d-3}
        \end{subfigure}
        \hfill
        \begin{subfigure}[t]{0.24\textwidth}
            \centering
            \includegraphics[width=\linewidth]{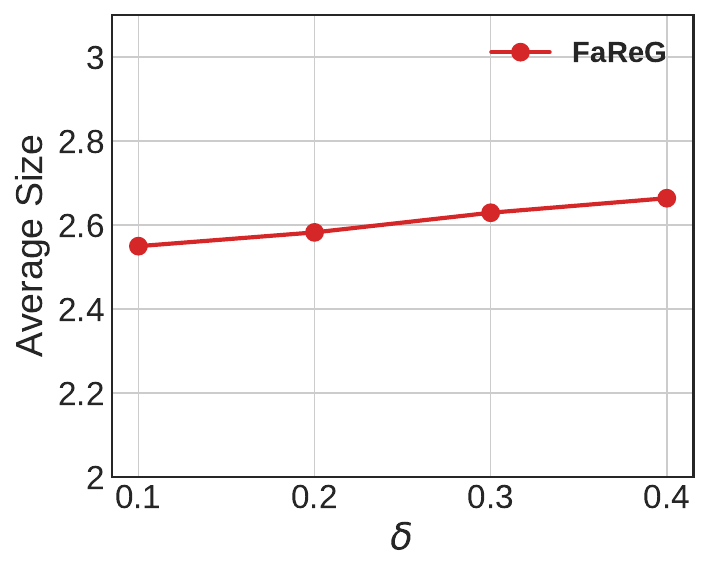}
            \caption{Average Size}\label{F:param-d-4}
        \end{subfigure}
        \vskip -0.5em
        \caption{Performance of prediction sets produced by our \name on synthetic data w.r.t. the selected group size proportion $\delta$. 
          \label{F:parameter-sensitive-1}
        }
    \end{minipage}
\end{figure}

\begin{figure}[t]
    \centering
    \begin{minipage}[t]{\textwidth}
        \centering
        \begin{subfigure}[t]{0.24\textwidth}
            \centering
            \includegraphics[width=\linewidth]{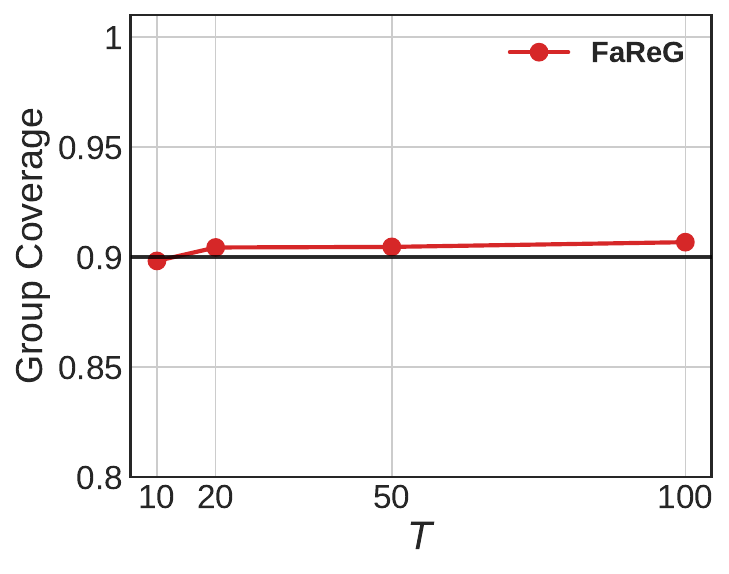}
            \caption{Group Coverage}\label{F:param-T-1}
        \end{subfigure}
        \hfill
        \begin{subfigure}[t]{0.24\textwidth}
            \centering
            \includegraphics[width=\linewidth]{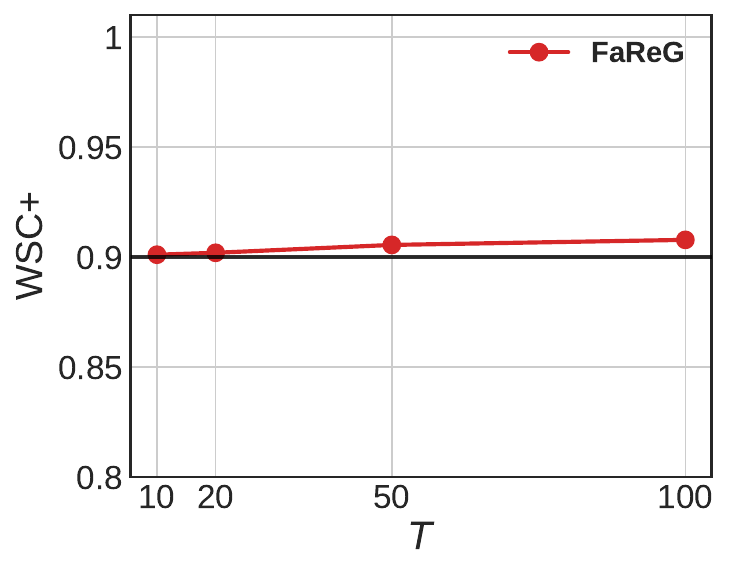}
            \caption{$\text{WSC}^+_n$}\label{F:param-T-2}
        \end{subfigure}
        \hfill
        \begin{subfigure}[t]{0.24\textwidth}
            \centering
            \includegraphics[width=\linewidth]{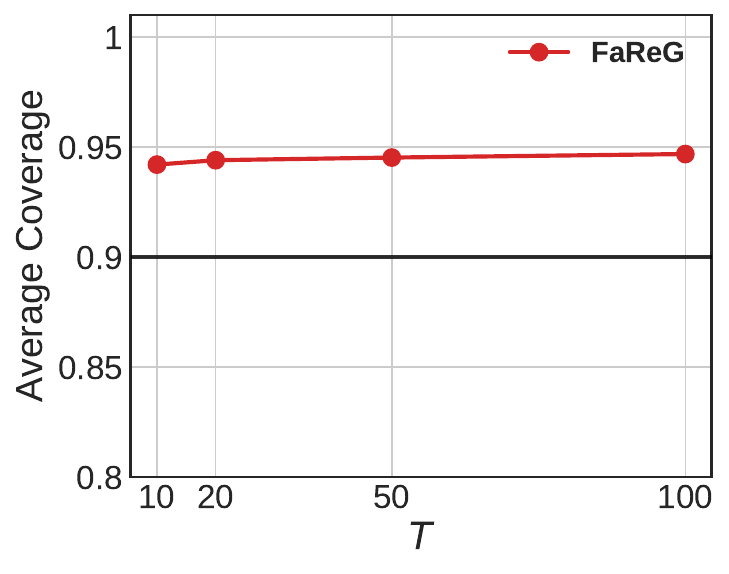}
            \caption{Average Coverage}\label{F:param-T-3}
        \end{subfigure}
        \hfill
        \begin{subfigure}[t]{0.24\textwidth}
            \centering
            \includegraphics[width=\linewidth]{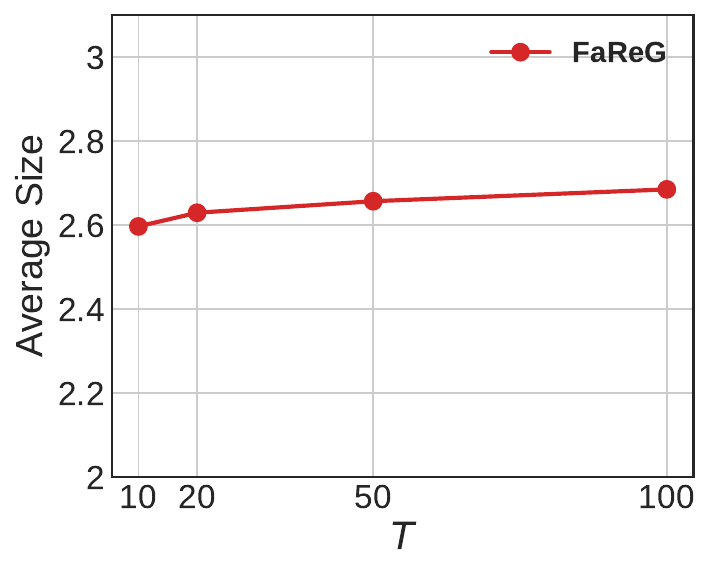}
            \caption{Average Size}\label{F:param-T-4}
        \end{subfigure}
        \vskip -0.5em
        \caption{Performance of prediction sets produced by our \name on synthetic data w.r.t. the sampling times $T$. 
          \label{F:parameter-sensitive-2}
        }
    \end{minipage}
\end{figure}

\begin{table*}[h]
    \centering
    \caption{Performance of prediction sets produced by our \name w.r.t the hyperparameter $\beta$.   
    \label{T:beta}}
    \resizebox{0.6\linewidth}{!}{
    \begin{sc}
    \begin{threeparttable}
      \begin{tabular}{lccccc}
       \toprule
        $\beta$ & 0.1 & 0.2 & 0.5 & 1.0 & 2.0  \\
        \midrule
        Group Coverage & 0.902 & 0.898 & 0.909 & 0.906 & 0.904 \\
        Average Size & 2.738 & 2.722 & 2.714 & 2.721 & 2.761 \\
      \bottomrule
      \end{tabular}
    \end{threeparttable}
    \end{sc}
 }
\end{table*}

In this section, we first investigate the sensitivity of two key parameters: the selected group size proportion $\delta$ and the number of sampling iterations $T$.
The results for $\delta$ and $T$ are presented in Fig.~\ref{F:parameter-sensitive-1} and Fig.~\ref{F:parameter-sensitive-2}, respectively.
Overall, the metrics Group Coverage and $\text{WSC}^+_n$ show relative insensitivity to the number of sampling iterations $T$, as illustrated in Fig.~\ref{F:param-T-1} and~\ref{F:param-T-2}.
In contrast, both Group Coverage and $\text{WSC}^+_n$ increase with the proportion $\delta$ of the selected group size relative to the entire dataset. This trend empirically provides implicit support for Proposition~\ref{prop:finite}.

We further conduct a sensitivity analysis of the hyperparameter $\beta$ (Eq.~\ref{Eq:Loss Funciton_2}) on synthetic data with a sample size of 2,000 and report the average results across 10 runs. 
The experimental results in Table~\ref{T:beta} show that, as $\beta$ varies, \name consistently achieves valid adaptive equalized coverage (0.9) with relatively steady efficiency.

\subsection{Group Visualization}\label{A:vision}

\begin{figure}[t]
\begin{minipage}{1.0\linewidth}
\centering
    \begin{subfigure}[t]{0.32\textwidth}
        \centering
        \includegraphics[width=\linewidth]{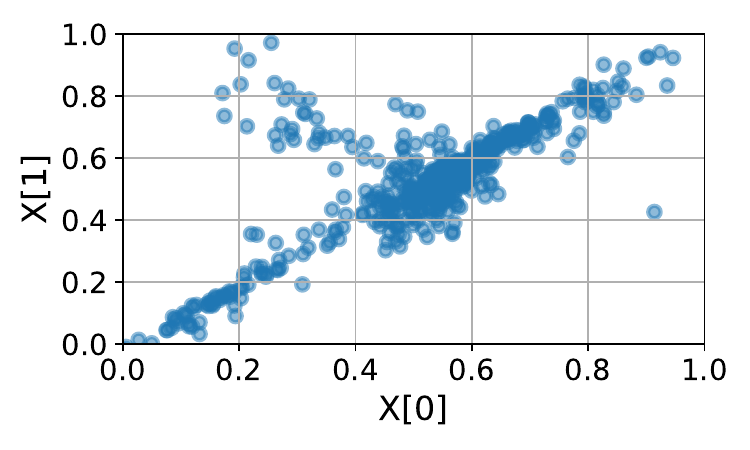}
        \caption{$\vert \hat{\mathcal{G}}_{\mathbf{s}}\vert / N=0.31$ }\label{F:vision-1}
    \end{subfigure}
    \begin{subfigure}[t]{0.32\textwidth}
        \centering
        \includegraphics[width=\linewidth]{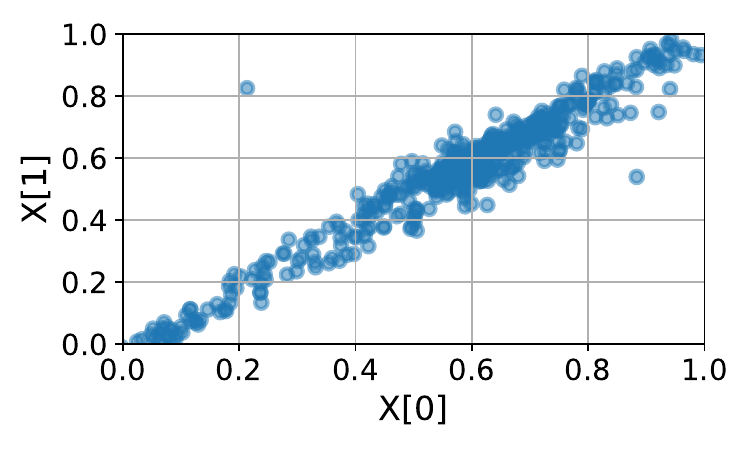}
        \caption{$\vert \hat{\mathcal{G}}_{\mathbf{s}}\vert / N=0.39$}\label{F:vision-2}
    \end{subfigure}
    \begin{subfigure}[t]{0.32\textwidth}
        \centering
        \includegraphics[width=\linewidth]{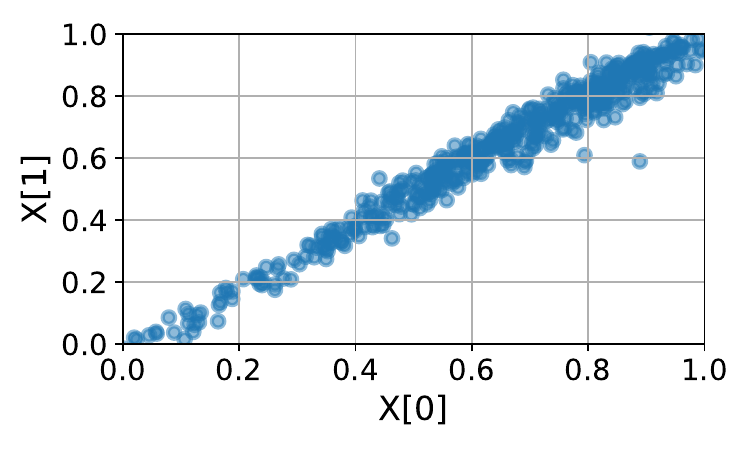}
        \caption{$\vert \hat{\mathcal{G}}_{\mathbf{s}}\vert / N=0.47$}\label{F:vision-3}
    \end{subfigure}
    \caption{The visualization results of reconstruction $\hat{X}$ when $\vert \hat{\mathcal{G}}_{\mathbf{s}}\vert / N$ increases.
    The latent representation $z$ generally captures an XNOR relationship between $X[0]$ and $X[1]$.}\label{F:vision}
\end{minipage}
\vspace{-0.8em}
\end{figure}

To analyze how features $X[0]$ and $X[1]$ contribute to group membership in $\hat{\mathcal{G}}_{\mathbf{s}}$, we perturb the encoding $z$ and examine the resulting reconstructions $\hat{X}$. Taking the sample size of 4,000 on the synthetic dataset as an example, we randomly select one run from 10 repeated trials and add perturbations of $+0.003$ and $+0.006$ to the fourth dimension of $z$, respectively. The reconstructed features $\hat{X}$ are visualized in Fig.~\ref{F:vision}.

Fig.~\ref{F:vision-1} shows that, without perturbation, the latent representation $z$ generally captures an XNOR relationship between $X[0]$ and $X[1]$, indicating that the encoder effectively filters out irrelevant feature information. After applying perturbations (see Fig.~\ref{F:vision-2} and~\ref{F:vision-3}), the XNOR pattern becomes more pronounced as ${\vert \hat{\mathcal{G}}_{\mathbf{s}} \vert / N}$ increases, revealing a positive correlation between $X[0] \odot X[1]$ and membership in $\hat{\mathcal{G}}_{\mathbf{s}}$. This result strengthens the interpretability of our approach by demonstrating that the representation-based groups reflect meaningful feature interactions.





\subsection{Other synthetic setups beyond XNOR}
In this part, we construct a more complicated setup by partitioning the synthetic data in Section~\ref{sec:synthetic data} into eight diverse subgroups: Male Child, Male Youth, Male Middle-Aged, Male Elderly, Female Child, Female Youth, Female Middle-Aged, and Female Elderly. 
We then impose algorithmic biases, using a procedure analogous to that in Section~\ref{sec:synthetic data}, on four of these subgroups: Male Child, Male Youth, Female Middle-Aged, and Female Elderly. 
In experiments with a sample size of 2,000, we compare our method, \name, with the SOTA method AFCP2. 
The experimental results in Table~\ref{T:new synthetic setup}, averaged over 10 runs, also demonstrate that \name achieves the ideal conditional coverage (0.9) while sacrificing very little efficiency.

\begin{table*}[h]
    \centering
    \caption{Performance of prediction sets produced by different CP methods on the other synthetic setups beyond XNOR.   
    \label{T:new synthetic setup}}
    \resizebox{0.83\linewidth}{!}{
    \begin{sc}
    \begin{threeparttable}
      \begin{tabular}{lcccc}
       \toprule
        {Method} & Group Coverage & $\text{WSC}^+_n$ & Average Coverage & Average Size   \\
        \midrule
        AFCP2 & 0.849 & 0.866 & 0.926 & 2.68 \\
        \name & 0.901 & 0.902 & 0.942 & 2.74 \\
      \bottomrule
      \end{tabular}
    \end{threeparttable}
    \end{sc}
 }
\end{table*}


\subsection{CP methods without adaptive group selections}

We next build \name upon the results of CondCP~\citep{gibbs2025conformal}, which does not consider the automatic selection of the sensitive groups as discussed in Section~\ref{sec:rel}.
As shown in Table~\ref{T:condCP}, experiments on synthetic data with a sample size of 2,000 reveal that CondCP may lack the expressiveness needed to capture complex unfairly treated groups. 
Consequently, our \name method further improves conditional coverage based on CondCP. 

\begin{table*}[h]
    \centering
    \caption{Performance of prediction sets produced by different CP methods with and without adaptive group selections.   
    \label{T:condCP}}
    \resizebox{0.9\linewidth}{!}{
    \begin{sc}
    \begin{threeparttable}
      \begin{tabular}{lcccc}
       \toprule
        {Method} & Group Coverage & $\text{WSC}^+_n$ & Average Coverage & Average Size   \\
        \midrule
        CondCP & 0.819 & 0.815 & 0.899 & 2.53 \\
        \name & 0.904 & 0.899 & 0.944 & 2.76 \\
        \midrule
        CondCP + \name & 0.903 & 0.897 & 0.940 & 2.72 \\
      \bottomrule
      \end{tabular}
    \end{threeparttable}
    \end{sc}
 }
\end{table*}


\subsection{More real-world datasets}

In this part, we evaluate our method using the standard ACSIncome dataset from Folktables~\citep{ding2021retiring}, which comprises nine features: Age, Class of worker, Educational attainment, Marital status, Occupation, Place of birth, Relationship, Working hours, Gender, and Race. 
Among these, Age, Marital status, Sex, and Race are designated as sensitive features. The target variable, i.e., a person’s income, is evenly divided into four brackets, forming a 4-class classification task. We further partition the Age feature into four intervals: [17–26, 27–36, 37–46, 47–56]. A specific subgroup is defined by the condition (Age = 17–26 \& Sex = Male) or (Age = 47–56 \& Sex = Female), into which we manually inject algorithmic biases, following a procedure similar to that in Section~\ref{sec:nursery data}. 
For time and cost constraints, particularly given the high complexity of the baseline AFCP method, we randomly sample 20,000 instances from the combined datasets of CA, NY, TX, AK, and AL. The proposed \name is then compared against the state-of-the-art approach, AFCP2.
Experimental results in Table~\ref{T:folktables} (averaged over 10 independent runs) show that \name still consistently achieves the valid conditional coverage of 0.9 while maintaining competitive efficiency.

\begin{table*}[h]
    \centering
    \caption{Performance of prediction sets produced by different CP methods on the ACSIncome data.   
    \label{T:folktables}}
    \vskip -1.0em
    \resizebox{0.83\linewidth}{!}{
    \begin{sc}
    \begin{threeparttable}
      \begin{tabular}{lcccc}
       \toprule
        {Method} & Group Coverage & $\text{WSC}^+_n$ & Average Coverage & Average Size   \\
        \midrule
        AFCP2 & 0.839 & 0.883 & 0.918 & 3.17 \\
        \name & 0.898 & 0.907 & 0.936 & 3.31 \\
      \bottomrule
      \end{tabular}
    \end{threeparttable}
    \end{sc}
 }
\end{table*}

\end{document}